\newtheorem{example}{Example}
\definecolor{gray}{rgb}{.4,.4,.4}
\definecolor{midgrey}{rgb}{0.5,0.5,0.5}
\definecolor{middarkgrey}{rgb}{0.35,0.35,0.35}
\definecolor{darkgrey}{rgb}{0.3,0.3,0.3}
\definecolor{darkred}{rgb}{0.7,0.1,0.1}
\definecolor{midblue}{rgb}{0.2,0.2,0.7}
\definecolor{darkblue}{rgb}{0.1,0.1,0.5}
\definecolor{defseagreen}{cmyk}{0.69,0,0.50,0}
\definecolor{tyellow1}{HTML}{FCE94F}
\definecolor{tyellow2}{HTML}{EDD400}
\definecolor{tyellow3}{HTML}{C4A000}
\definecolor{torange1}{HTML}{FCAF3E}
\definecolor{torange2}{HTML}{F57900}
\definecolor{torange3}{HTML}{C35C00}
\definecolor{tbrown1}{HTML}{E9B96E}
\definecolor{tbrown2}{HTML}{C17D11}
\definecolor{tbrown3}{HTML}{8F5902}
\definecolor{tgreen1}{HTML}{8AE234}
\definecolor{tgreen2}{HTML}{73D216}
\definecolor{tgreen3}{HTML}{4E9A06}
\definecolor{tblue1}{HTML}{729FCF}
\definecolor{tblue2}{HTML}{3465A4}
\definecolor{tblue3}{HTML}{204A87}
\definecolor{tpurple1}{HTML}{AD7FA8}
\definecolor{tpurple2}{HTML}{75507B}
\definecolor{tpurple3}{HTML}{5C3566}
\definecolor{tred1}{HTML}{EF2929}
\definecolor{tred2}{HTML}{CC0000}
\definecolor{tred3}{HTML}{A40000}
\definecolor{tlgray1}{HTML}{EEEEEC}
\definecolor{tlgray2}{HTML}{D3D7CF}
\definecolor{tlgray3}{HTML}{BABDB6}
\definecolor{tdgray1}{HTML}{888A85}
\definecolor{tdgray2}{HTML}{555753}
\definecolor{tdgray3}{HTML}{2E3436}
\newcommand{\hlight}[1]{{\color{darkred}#1}}
\newcommand{\rhlight}[1]{\hlight{#1}}
\newcommand{\dghlight}[1]{{\color[RGB]{0,120,0}#1}}
\newtheorem{lemma}{Lemma}
\newtheorem{proposition}{Proposition}
\newtheorem{definition}{Definition}
\newtheorem{corollary}{Corollary}
\crefname{theorem}{Theorem}{Theorems}
\crefname{lemma}{Lemma}{Lemmas}
\crefname{proposition}{Proposition}{Propositions}
\crefname{definition}{Definition}{Definitions}
\crefname{corollary}{Corollary}{Corollaries}
\crefname{example}{Example}{Examples}
\crefname{claim}{Claim}{Claims}
\crefname{assumption}{Assumption}{Assumptions}
\newcommand{\fml}[1]{{\mathcal{#1}}}
\newcommand{\tn}[1]{\textnormal{#1}}
\newcommand{\tbf}[1]{\textbf{#1}}
\newcommand{\mbf}[1]{\ensuremath\mathbf{#1}}
\newcommand{\mbb}[1]{\ensuremath\mathbb{#1}}
\newcommand{\mfrak}[1]{\ensuremath\mathfrak{#1}}
\newcommand{\waxp}{\ensuremath\mathsf{WAXp}}
\newcommand{\wcxp}{\ensuremath\mathsf{WCXp}}
\newcommand{\axp}{\ensuremath\mathsf{AXp}}
\newcommand{\cxp}{\ensuremath\mathsf{CXp}}
\newcommand{\waxpui}{\ensuremath\mathsf{WAXp}}
\newcommand{\wpaxp}{\ensuremath\mathsf{WPAXp}}
\newcommand{\wpaxpui}{\ensuremath\mathsf{WPAXp}}
\DeclareMathOperator*{\consistent}{\textnormal{\bfseries{\textsf{CO}}}\!}
\newcommand{\onexp}{\ensuremath\mathsf{oneXP}}
\newcommand{\frp}{\ensuremath\mathsf{isRelevant}}
\newcommand{\vars}{\ensuremath\mathsf{Vars}}
\newcommand{\predicate}{\mbb{P}}
\newcommand{\predaxp}{\predicate_{\tn{axp}}}
\newcommand{\predcxp}{\predicate_{\tn{cxp}}}
\newcommand{\bigland}{\ensuremath\bigwedge}
\newcounter{tableeqn}[table]
\DeclareMathOperator*{\limply}{\rightarrow}
\definecolor{darkslategray}{rgb}{0.18, 0.31, 0.31} 
\definecolor{platinum}{rgb}{0.9, 0.89, 0.89} 
\definecolor{gray}{rgb}{.4,.4,.4}
\definecolor{midgrey}{rgb}{0.5,0.5,0.5}
\definecolor{middarkgrey}{rgb}{0.35,0.35,0.35}
\definecolor{darkgrey}{rgb}{0.3,0.3,0.3}
\definecolor{darkred}{rgb}{0.7,0.1,0.1}
\definecolor{midblue}{rgb}{0.2,0.2,0.7}
\definecolor{darkblue}{rgb}{0.1,0.1,0.5}
\definecolor{darkgreen}{rgb}{0.1,0.5,0.1}
\definecolor{defseagreen}{cmyk}{0.69,0,0.50,0}
\newcommand{\anote}[1]{\medskip\noindent$\llbracket$\textcolor{darkred}{antonio}: \emph{\textcolor{middarkgrey}{#1}}$\rrbracket$\medskip}
\newcommand{\rnote}[1]{\medskip\noindent$\llbracket$\textcolor{darkred}{ramon}: \emph{\textcolor{middarkgrey}{#1}}$\rrbracket$\medskip}
\newcommand{\jnoteF}[1]{}
\newcommand{\rnoteF}[1]{}
\newcolumntype{L}[1]{>{\raggedright\let\newline\\\arraybackslash\hspace{0pt}}m{#1}}
\newcolumntype{C}[1]{>{\centering\let\newline\\\arraybackslash\hspace{0pt}}m{#1}}
\newcolumntype{R}[1]{>{\raggedleft\let\newline\\\arraybackslash\hspace{0pt}}m{#1}}
\def\xscriptsize{\@setfontsize\scriptsize{8pt}{9pt}}
\def\xtiny{\@setfontsize\tiny{7pt}{8pt}}
\def\xlarge{\@setfontsize\large{11pt}{12pt}}
\def\xLarge{\@setfontsize\Large{12pt}{14pt}}
\def\xLARGE{\@setfontsize\LARGE{13pt}{15pt}}
\def\xhuge{\@setfontsize\huge{20pt}{20pt}}
\def\xHuge{\@setfontsize\Huge{28pt}{28pt}}
\tikzset{
  0 my edge/.style={densely dashed, my edge},
  my edge/.style={-{Stealth[]}},
}
\def\HiLi{\leavevmode\rlap{\hbox to \linewidth{\color{platinum}\leaders\hrule height .8\baselineskip depth .5ex\hfill}}}
\setlist{nosep,leftmargin=0.45cm}
\algnewcommand{\LineComment}[1]{\Statex \hskip\ALG@thistlm \(\triangleright\) #1}
\newcommand{\SAT}{\ensuremath\mathsf{SAT}}
\newcommand{\outc}{\ensuremath\mathsf{outc}}
\newcommand{\True}{\textbf{true}}
\newcommand{\False}{\textbf{false}}
\newcommand{\prtaxp}{\ensuremath\mathsf{reportAXp}}
\newcommand{\prtcxp}{\ensuremath\mathsf{reportCXp}}
\titlespacing*{\paragraph}{0pt}{1.25ex plus 1ex minus .2ex}{1em}
\begin{document}

\begin{frontmatter}

\iftrue
\makeatletter
\let\orcids@fmt\relax
\makeatother
\fi




\title{On Logic-Based Explainability\\with Partially Specified Inputs}

\iftrue
\author[A]{Ramón Béjar. Email: ramon@udl.cat.}
\author[A]{António Morgado. Email: ajrmorgado@gmail.com.}
\author[A]{Jordi Planes. Email: jordi.planes@udl.cat.}
\author[B]{Joao Marques-Silva. Email: joao.marques-silva@irit.fr.}

\address[A]{Univ.~Lleida, Leida, Spain}
\address[B]{IRIT, CNRS, Toulouse, France}
\fi

\begin{abstract}
  In the practical deployment of machine learning (ML) models, missing
  data represents a recurring challenge.
  Missing data is often addressed when training ML models.
  But missing data also needs to be addressed when deciding predictions and when explaining those predictions.
  Missing data represents an opportunity to partially specify the inputs of the prediction to be explained.
  This paper studies the computation of logic-based explanations in
  the presence of partially specified inputs.
  The paper shows that most of the algorithms proposed in recent years
  for computing logic-based explanations can be generalized for
  computing explanations given the partially specified inputs.
  One related result is that the complexity of computing logic-based 
  explanations remains unchanged. 
  A similar result is proved in the case of logic-based explainability subject
  to input constraints.
  Furthermore, the proposed solution for computing explanations given partially specified inputs is applied to classifiers obtained from well-known
  public datasets, thereby illustrating a number of novel
  explainability use cases.
  %
\end{abstract}

\end{frontmatter}

\section{Introduction} \label{sec:intro}

In practical uses of machine learning (ML) models, there are often
situations of missing data~\cite{rubin-bk19}, i.e.\ feature values
that are either unknown or are preferred to be left unspecified.
%
%
%
Datasets may contain missing feature values, and so the
learning of ML models must account for missing data~\cite{shamir-icml08,graham-arp09,shamir-ml10,graham-bk12,peng-sp13,little-jpp14,little-ps18,rubin-bk19,morvan2021}.
%
%
%
For example,
the \emph{Bosch Production Line Performance}\footnote{\url{https://www.kaggle.com/competitions/bosch-production-line-performance}}
dataset has about 80\% of missing values~\cite{Mangal16},
%
the \emph{Pima Indians Diabetes}\footnote{\url{https://www.kaggle.com/datasets/uciml/pima-indians-diabetes-database}}
dataset is known to have about 60\% of features with missing values,
%
%
and the \emph{Water Potability}\footnote{\url{https://www.kaggle.com/datasets/adityakadiwal/water-potability}}
dataset has about 20\% missing values in the feature ''Sulfate''.
%
%

However, in some situations the cost and/or the effort of assigning
values to some features (\cite{saar-tsechankty07}) offers strong motivation for deciding
predictions in the presence of missing feature values. 
For example, features that are (far) more time-consuming to acquire,
or (much) more expensive, or even that are invasive for patients, are 
in general preferred to be considered only if truly necessary for a
prediction.
Concrete practical use cases include medical
diagnosis~\cite{bro-cils05,manski13-pnas13,disdier-bmcmidm16,demoor-cmpb22},
but also in situations where costly/expensive field measures are
preferred not to be tried unless necessary~\cite{zeng-wiseml22,cesa-bianchi11}.
Another situation relates to how sensitive is the information of the feature's value.
For example a user may prefer not to disclose information about its own
family income~\cite{schenker06}.
Besides model learning and model prediction, missing feature values
are also a concern for model
explainability~\cite{das-nips18,vandenbroeck-ijcai19,vandenbroeck-corr20,das-corr20}.

%
%

In Deep Learning, ablation studies~\cite{Sheikholeslami21} and 
pruning~\cite{Blalock20} are techniques which ablate or shrink 
the network. 
In both cases, explanations with missing values may give 
important information of how the removal of parts of the
network (i.e. the missing values) could affect to its consistency~\cite{morvan2021,Ayme22}
fairness~\cite{Jeong22}, or even increase its accuracy~\cite{Yeom21}.

For the problems of model prediction and model explainability, even when
the ML model is known, model-based reasoning is complicated by the
existence of unspecified features in the input instance, since these may
take any value from their domain.
As a result, the existence of missing feature values in the inputs
raises questions about what can be predicted, but also
about what needs to be explained.
%
%
%
%
%

This paper formalizes the computation of predictions and the
identification of explanations when some features are left
unspecified. Although the inputs to the ML model can be left
unspecified, the model itself is assumed to be known and
it can be reasoned about.
%

Regarding the set of specified features, our goal is to understand
which class (or set of classes) can be predicted given the unspecified
features.
This requires generalizing the definition of classification function
by allowing sets of classes to be predicted, as opposed to single
class prediction considered in earlier work on explainability.
Under the assumption that the set of predicted classes is deemed of
interest, we investigate how rigorous logic-based explanations can be
computed.
In contrast with recent
work~\cite{das-nips18,vandenbroeck-ijcai19,vandenbroeck-corr20}, we
prove that the so-called prime implicant (PI)
explanations~\cite{darwiche-ijcai18} can be computed with minor 
changes to existing algorithms, and such that the complexity of
computing a single explanation remains unchanged with respect to
\emph{each} family of classifiers. For example, polynomial-time
tractability results, obtained for completely specified inputs,
generalize to the case of incompletely specified inputs. Furthermore,
the complexity of enumerating explanations remains unchanged.

Besides computing one explanation, we investigate how enumeration
of explanations can be instrumented, and
how to consider unspecified inputs when querying for the relevance
of a feature.
We also consider how to compute a probabilistic explanation and how constraints on the
inputs can be taken into account in the presence of unspecified values in the input.

The paper is organized as follows. \cref{sec:prelim} introduces the
definitions and notation used in the rest of the paper, including 
basic definitions on logic-based explainability.
Afterwards, \cref{sec:xpsi} develops modifications to the definitions
of classifiers and explanations which are required to account for
unspecified inputs.
Furthermore, changes to the basic explainability algorithms are also
described.
\cref{sec:xtra} studies a number of additional properties, and a set of generalizations of explanations.
%
\cref{sec:res} summarizes the experiments, and aims to illustrate the
generality of the ideas proposed in the paper.
\cref{sec:relw} briefly overviews recent related work. 
Finally, \cref{sec:conc} summarizes the paper's contributions and
identifies directions of research.

\section{Preliminaries} \label{sec:prelim}

\paragraph{Basics.}
%
The paper assumes basic knowledge of computational complexity, namely
the classes of decision problems P and NP~\cite{arora-bk09}, and their
function variants.
The paper also assumes basic knowledge of propositional logic,
including the Boolean satisfiability (SAT) decision problem for
propositional logic formulas in conjunctive normal form (CNF), 
and the use of SAT solvers as oracles. 
The interested
reader is referred to textbooks on these
topics~\cite{arora-bk09,sat-hbk21}.

\paragraph{Classification problems in ML.}
Throughout the paper, we will consider classifiers as the underlying
ML model. 
Classification problems are defined on a set of features (or
attributes) $\fml{F}=\{1,\ldots,m\}$ and a set of classes
$\fml{K}=\{c_1,c_2,\ldots,c_K\}$.
Each feature $i\in\fml{F}$ takes values from a domain
$\mbb{D}_i$. Domains of features are aggregated as a sequence
$\mbb{D}=\langle\mbb{D}_1,\ldots,\mbb{D}_m\rangle$. Domains are
categorical or ordinal, and each domain can be defined on boolean,
integer/discrete or real values. 
Feature space $\mbb{F}$ denotes the cartesian product of the domains,
i.e.\  $\mbb{F}=\mbb{D}_1\times{\mbb{D}_2}\times\ldots\times{\mbb{D}_m}$.
%
The notation $\mbf{x}=(x_1,\ldots,x_m)$ denotes an arbitrary point in
feature space, where each $x_i$ is a variable taking values from
$\mbb{D}_i$. The set of variables associated with the features is
$X=\{x_1,\ldots,x_m\}$.
Moreover, the notation $\mbf{v}=(v_1,\ldots,v_m)$ represents a
specific point in feature space, where each $v_i$ is a constant
value from $\mbb{D}_i$. 
A classifier $\fml{M}$ is characterized by a (non-constant)
\emph{classification function} $\kappa$ that maps feature space
$\mbb{F}$ into the set of classes $\fml{K}$,
i.e.\ $\kappa:\mbb{F}\to\fml{K}$.
We also assume that $\kappa$ is surjective, i.e.\ for any class in
$c\in\fml{K}$ there is at least one point $\mbf{x}\in\mbb{F}$ such that
$\kappa(\mbf{x})=c$.
A classification problem is a tuple
$\fml{C}=(\fml{F},\mbb{D},\mbb{F},\fml{K},\kappa)$.
An \emph{instance} 
is a pair $(\mbf{v}, c)$, where $\mbf{v}\in\mbb{F}$,
$c\in\fml{K}$, and $c=\kappa(\mbf{v})$.
%

\paragraph{Logic-based explanations.}
%
Prime implicant (PI) explanations~\cite{darwiche-ijcai18} represent a
minimal set of literals (relating a feature value $x_i$ and a constant 
$v_i\in\mbb{D}_i$) that are logically sufficient for the prediction.
PI-explanations are related with logic-based abduction, and so are
also referred to as \emph{abductive explanations}
(AXp's)~\cite{inms-aaai19}. 
There is comprehensive evidence that AXp's offer guarantees of rigor
that are not offered by other alternative explanation
approaches~\cite{ignatiev-ijcai20}. 
%
More recently, AXp's have been studied
in terms of their computational
complexity~\cite{barcelo-nips20,marquis-kr21}.
Furthermore, there is a growing body of recent work on logic-based
explanations~\cite{darwiche-jair21,kutyniok-jair21,ims-ijcai21,kwiatkowska-ijcai21,msgcin-icml21,barcelo-nips21,mazure-cikm21,rubin-aaai22,msi-aaai22,amgoud-ijcai22,barcelo-nips22}. 

Given a classification problem $\fml{C}$ and instance $(\mbf{v},c)$,
with $\mbf{v}\in\mbb{F}$, $c\in\fml{K}$ and $c=\kappa(\mbf{v})$, 
an explanation problem is a tuple $\fml{E}=(\fml{C},(\mbf{v},c))$.
%
%
For an explanation problem $\fml{E}=(\fml{C},(\mbf{v},c))$,
an AXp is any subset-minimal set $\fml{X}\subseteq\fml{F}$ such that,
the following predicate holds:
\begin{align} \label{eq:axp}
  \waxp(\fml{X}) & := 
   \forall(\mbf{x}\in\mbb{F}).
  \left[
    \bigland\nolimits_{i\in{\fml{X}}}(x_i=v_i)
    \right]
  \limply(\kappa(\mbf{x})=c)
\end{align}
If a set $\fml{X}\subseteq\fml{F}$ is not 
minimal but \eqref{eq:axp} holds, then $\fml{X}$ is referred to as a
\emph{weak} AXp.
%
%
%
Clearly, the predicate $\waxp$ maps $2^{\fml{F}}$ into $\{\bot,\top\}$
(or $\{\False,\True\}$).
Given $\mbf{v}\in\mbb{F}$, an AXp $\fml{X}$ represents an irreducible
(or minimal) subset of the features which, if assigned the values
dictated by $\mbf{v}$, are sufficient for the prediction $c$,
i.e.\ value changes to the features not in $\fml{X}$ will not change
the prediction. (Alternatively, an AXp can be viewed as an answer to a
``Why?'' (the prediction) question.)
We can use the definition of the predicate $\waxp$ to formalize the
definition of the predicate $\axp$, also defined on subsets $\fml{X}$
of $\fml{F}$:
\begin{equation} \label{eq:axp2}
  \axp(\fml{X}) :=
  \waxp(\fml{X}) \land
  \forall(\fml{X}'\subsetneq\fml{X}).
  \neg\waxp(\fml{X}')
\end{equation}

Complementary to AXp's, contrastive explanations~\cite{inams-aiia20}
(CXp's) answer a ``Why Not?'' question, and identify subset-minimal
sets of features which, if allowed to take any value, are sufficient
to change the prediction\footnote{%
From a logical perspective, there can exist differences between
contrastive and counterfactual explanations~\cite{lorini-corr21}. For
the purposes of this paper, such difference is non-existing, and we
will just use the term contrastive explanation.}:
\begin{align} \label{eq:cxp}
  \wcxp(\fml{Y}) & := 
   \exists(\mbf{x}\in\mbb{F}).
  \left[
    \bigland\nolimits_{i\not\in{\fml{Y}}}(x_i=v_i)
    \right]
  \land(\kappa(\mbf{x})\not=c)
\end{align}
Accordingly, we can define the predicate $\cxp$:
\begin{equation} \label{eq:cxp2}
  \cxp(\fml{Y}) :=
  \wcxp(\fml{Y}) \land
  \forall(\fml{Y}'\subsetneq\fml{Y}).
  \neg\wcxp(\fml{Y}')
\end{equation}

The definitions of $\waxp$ and $\wcxp$ ensure that these predicates
are \emph{monotone} and up-closed.
Indeed, if $\fml{X}\subseteq\fml{X}'\subseteq\fml{F}$, and if
$\fml{X}$ is a weak AXp, then $\fml{X}'$ is also a weak AXp, as the
fixing of more features will not change the prediction.
Similarly, for weak CXp's $\fml{Y}\subseteq\fml{Y}'\subseteq\fml{F}$,
if $\wcxp(\fml{Y})$, then $\wcxp(\fml{Y}')$.
%
%
%
Given the monotonicity of predicates $\waxp$/$\wcxp$, the definition
of predicates $\axp$/$\cxp$ can be simplified as follows, with
$\fml{X}\subseteq\fml{F}$ (resp.~$\fml{Y}\subseteq\fml{F}$):
%
%
\begin{align}
  \axp(\fml{X}) & ~:=~
  \waxp(\fml{X})
  \land_{t\in\fml{X}}\neg\waxp(\fml{X}\setminus\{t\}) \label{eq:axp3}
  \\
  \cxp(\fml{Y}) & ~:=~
  \wcxp(\fml{Y})
  \land_{t\in\fml{Y}}\neg\wcxp(\fml{Y}\setminus\{t\})  \label{eq:cxp3}
\end{align}

These simpler but equivalent definitions of AXp's/CXp's have important
practical significance, in that only a linear number of subsets needs
to be checked for, as opposed to exponentially many subsets
in~\eqref{eq:axp2}/\eqref{eq:cxp2}. As a result, the algorithms that
compute one AXp/CXp are based
on~\eqref{eq:axp3}/\eqref{eq:cxp3}~\cite{msi-aaai22}.
%

\begin{figure}[t]
  \begin{subfigure}[b]{0.2\textwidth}
    \begin{center}
      \begin{tabular}{l}
        $\fml{F}=\{1,2,3,4,5,6,7\}$ \\[1.5pt]
        $\mbb{D}_i=\{0,1\}, i=1,\ldots,4$\\[1.5pt]
        $\fml{K}=\{0,1,2,3\}$ \\
      \end{tabular}
    \end{center}
    \caption{Definition of $\fml{F},\mbb{D}_i,\fml{K}$}
  \end{subfigure}
  %
  %
  \begin{subfigure}[b]{0.2\textwidth}
    \begin{center}
      \begin{tabular}{l}
        $\kappa(\mbf{x})= \left\lfloor \frac{\sum_{i\in \fml{F}} x_i}{2} \right\rfloor$
      \end{tabular}
    \end{center}
    \caption{Definition of $\kappa$}
  \end{subfigure}
  \caption{Example of a monotonic classifier} \label{fig:runex}
\end{figure}

Given an explanation problem $\fml{E}=(\fml{C},(\mbf{v},c))$,
the sets of AXp's and CXp's are defined as follows,
\begin{align}
  \mbb{A}(\fml{E}) & = \{\fml{X}\subseteq\fml{F}\,|\,\axp(X)\}\\
  \mbb{C}(\fml{E}) & = \{\fml{Y}\subseteq\fml{F}\,|\,\cxp(Y)\}
\end{align}
Clearly, the sets above are parameterized on the associated
explanation problem.
Moreover, AXp's and CXp's respect an instrumental duality
property~\cite{inams-aiia20}, which builds on Reiter's seminal work on
model-based diagnosis~\cite{reiter-aij87}.

\begin{proposition}[Duality] \label{prop:dual}
  For an explanation problem $\fml{E}$, each AXp in $\mbb{A}(\fml{E})$
  is a minimal hitting set (MHS) of the CXp's in $\mbb{C}(\fml{E})$
  and vice-versa.
\end{proposition}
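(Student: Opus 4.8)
The plan is to derive the stated duality from the classical hitting-set duality of Reiter~\cite{reiter-aij87}, by first establishing a single complementation bridge between the two \emph{weak} predicates and then lifting it through the monotonicity (up-closedness) of $\waxp$ and $\wcxp$ already observed above. First I would prove the bridging identity
\[
  \waxp(\fml{X}) \iff \neg\,\wcxp(\fml{F}\setminus\fml{X}),
  \quad \fml{X}\subseteq\fml{F},
\]
by directly inspecting \eqref{eq:axp} and \eqref{eq:cxp}. Negating \eqref{eq:axp} yields $\exists(\mbf{x}\in\mbb{F}).\,[\bigland_{i\in\fml{X}}(x_i=v_i)]\land(\kappa(\mbf{x})\neq c)$, and substituting $\fml{Y}=\fml{F}\setminus\fml{X}$ into \eqref{eq:cxp} (so that the condition $i\notin\fml{Y}$ becomes $i\in\fml{X}$) produces exactly the same formula; hence the two sides are logically equivalent. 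This is the only point at which the explicit definitions are unpacked.

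Next I would turn this bridge into a hitting-set characterization. Since $\wcxp$ is up-closed, a set fails to be a weak CXp precisely when it contains no subset-minimal CXp; that is, $\neg\wcxp(\fml{F}\setminus\fml{X})$ holds iff no $\fml{Y}\in\mbb{C}(\fml{E})$ satisfies $\fml{Y}\subseteq\fml{F}\setminus\fml{X}$, which is in turn equivalent to $\fml{Y}\cap\fml{X}\neq\emptyset$ for every $\fml{Y}\in\mbb{C}(\fml{E})$. Combined with the bridge, this shows that $\fml{X}$ is a weak AXp if and only if $\fml{X}$ is a hitting set of $\mbb{C}(\fml{E})$. Because the family of weak AXp's coincides, as a set family, with the family of hitting sets of $\mbb{C}(\fml{E})$, their subset-minimal members coincide as well; by \eqref{eq:axp2} the minimal weak AXp's are exactly the AXp's, so every AXp is a minimal hitting set of $\mbb{C}(\fml{E})$ and conversely. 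The reverse direction (each CXp is a minimal hitting set of $\mbb{A}(\fml{E})$) follows by the symmetric argument, substituting $\fml{X}=\fml{F}\setminus\fml{Y}$ into the same bridge to obtain $\wcxp(\fml{Y})\iff\neg\waxp(\fml{F}\setminus\fml{Y})$ and invoking the up-closedness of $\waxp$.

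I expect the main obstacle to be bookkeeping rather than conceptual: the delicate step is the passage from $\neg\wcxp(\fml{F}\setminus\fml{X})$ to ``$\fml{X}$ hits every CXp'', which relies essentially on up-closedness to replace the weak predicate by its minimal witnesses. I would also check the degenerate cases to make sure the minimal-hitting-set notion is the right one and that no empty-family pathology arises: $\fml{F}$ is always a weak AXp (since $\kappa(\mbf{v})=c$) and, as $\kappa$ is non-constant, $\fml{F}$ is also a weak CXp, so both $\mbb{A}(\fml{E})$ and $\mbb{C}(\fml{E})$ are nonempty; moreover $\emptyset$ is never a weak CXp, so no AXp is forced to be empty. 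These observations confirm that subset-minimality of hitting sets matches subset-minimality of the weak predicates throughout.
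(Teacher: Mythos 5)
Your proof is correct, and it takes a genuinely different (and more unified) route than the one the paper uses. The paper does not prove \cref{prop:dual} directly (it is imported from prior work), but its appendix proves the generalized version \cref{prop:dualmf}, of which \cref{prop:dual} is the special case $\fml{S}=\fml{F}$, $\fml{T}=\{c\}$. That proof is split into three pieces: a direct contradiction argument showing that every AXp intersects every CXp, plus two auxiliary lemmas (``a hitting set of $\mbb{A}(\fml{E})$ is a weak CXp'' and ``a hitting set of $\mbb{C}(\fml{E})$ is a weak AXp'') that are then used, again by contradiction, to establish minimality of the hitting sets. Your argument instead isolates the single complementation identity $\waxp(\fml{X})\iff\neg\wcxp(\fml{F}\setminus\fml{X})$ and combines it with up-closedness to conclude that the family of weak AXp's \emph{coincides} with the family of hitting sets of $\mbb{C}(\fml{E})$; both the intersection property and the matching of minimal elements then fall out simultaneously, with the reverse direction by symmetry. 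The paper's two lemmas are essentially the two contrapositive halves of your bridge, proved separately. What your approach buys is economy and a clear identification of where monotonicity is actually needed (the passage from $\neg\wcxp(\fml{F}\setminus\fml{X})$ to ``$\fml{X}$ hits every CXp''); what the paper's approach buys is that each step is a self-contained first-order manipulation that transfers verbatim to the partially specified setting of \cref{prop:dualmf}, where the complement must be taken inside $\fml{S}$ rather than $\fml{F}$. Your closing sanity checks (non-emptiness of both families, $\emptyset$ never being a weak CXp) are correct and are not made explicit in the paper.
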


\paragraph{Running example.}
Although the results in this paper apply to \emph{any} ML classifier,
we will illustrate the main ideas with an intuitive family of
classifiers, namely decision trees (DTs). Concretely, the paper considers
a decision tree for a medical diagnosis example adapted
from~\cite[Fig.~1]{tanner-plosntd08}, which targets the prediction of
dengue fever.

\begin{example} \label{ex:runex01}
  The example decision tree is shown in~\cref{fig:runex}.
  The set of features is $\fml{F}=\{1,2,3,4,5,6\}$, each of which with
  a real-valued domain. The set of classes is
  $\fml{K}=\{\ominus\ominus,\ominus,\oplus,\oplus\oplus\}$.
  Each class has the following interpretation:
  i) $\oplus\oplus$ indicates that dengue fever is probable (or very likely);
  ii) $\oplus$ that dengue fever is likely;
  iii) $\ominus$ that non-dengue fever is likely;
  and
  iv) $\ominus\ominus$ that non-dengue fever is probable\footnote{%
  Although orthogonal to the example, the features' original names are
  as follows (from~\cite{tanner-plosntd08}):
  HCT: hematocrit,
  Lymp.: total number of lymphocytes,
  Neutr.: total number of neutrophils, 
  PLT: platelet count,
  Temp.: body temperature,
  WBC=white blood cell count.}.
  %
\end{example}

\section{Explanations with Partially Specified Inputs} \label{sec:xpsi}


\subsection{Partially-Specified Inputs}
%
We consider the situation of partially specified inputs to an ML
model, i.e.\ not all features of the input instance $\mbf{v}$ are assigned values, in which case we
say that there exist \emph{missing feature values} (or simply missing inputs).
To account for these missing values, we extend the definition of a
classifier to allow for \emph{partially-specified inputs}, as
follows.
Each domain $\mbb{D}_i$ is extended with a distinguished value
$\mfrak{u}$, not in any $\mbb{D}_i$, which indicates that feature $i$
is left unspecified, i.e.\ it could possibly take \emph{any} value
from its domain. Concretely, $\mbb{D}'_i=\mbb{D}_i\cup\{\mfrak{u}\}$.
Feature space (with unspecified inputs) becomes
$\mbb{F}'=\mbb{D}'_1\times\cdots\times\mbb{D}'_m$. 
A completely specified point in feature space $\mbf{v}\in\mbb{F}$ is
\emph{covered} by $\mbf{z}\in\mbb{F}'$, written
$\mbf{v}\sqsubseteq\mbf{z}$,
if
$\forall(i\in\fml{F}).\left[(v_i=z_i)\lor({z_i}=\mfrak{u})\right]$.
Given $\mbf{z}\in\mbb{F}'$, $\fml{F}$ is partitioned into $\fml{U}$
and $\fml{S}$, containing respectively the unspecified ($\fml{U}$) and
specified($\fml{S}$) features.

It will also be convenient to generalize the classification function
to map points in $\mbb{F}'$ into a set $\fml{T}\subseteq\fml{K}$\footnote{The case
where $\fml{T}=\fml{K}$ is uninteresting for our purposes and so it is
not considered.} of (target) classes, i.e.\ $\kappa':\mbb{F}'\to2^{\fml{K}}$.
Thus, given a partially specified input $\mbf{z}\in\mbb{F}'$, the
(generalized) prediction is some $\fml{T}\subsetneq\fml{K}$.
%
Hence, $\kappa'$ is defined as follows,
\begin{equation} \label{eq:defkp}
  \kappa'(\mbf{z})=\{c\in\fml{K}\,|\,\exists(\mbf{v}\in\mbb{F}).\mbf{v}\sqsubseteq\mbf{z}\land{c}=\kappa(\mbf{v})\}
\end{equation}
In general, we denote $\kappa'(\mbf{z})$ by $\fml{T}\subsetneq\fml{K}$,
and refer to $\fml{T}$ as the (generalized) prediction given
$\mbf{z}$.
Finally, a generalized (i.e.\ with partially specified inputs)
explanation problem is defined as a tuple
$\fml{E}'=(\fml{C},(\mbf{z},\fml{T}))$. 

\begin{example} \label{ex:runex02}
  For the running example (see~\cref{fig:runex}), we consider that
  some of the features are unspecified, namely features 3 (for Neutr.)
  and 5 (for Temp.). The other features are assigned the values shown
  in~\cref{fig:runex}.
  %
  Thus $\mbf{z}=(35.0,0.25,\mfrak{u},100.0,\mfrak{u},5.0)$,
  $\fml{F}$ is partitioned into $\fml{U}=\{3,5\}$, and
  $\fml{S}=\{1,2,4,6\}$,
  %
  and $\kappa'(\mbf{z})=\{\oplus,\oplus\oplus\}=\fml{T}$.
  %
  Moreover, we assume that the user considers the set of of possible
  predictions $\fml{T}=\{\oplus,\oplus\oplus\}$ to be
  satisfactory, and
  seeks an explanation for $\fml{T}$.
\end{example}

\begin{example}
  Considering again the running example, let $H_1=P_1=P_2=1$, $T_1=0$
  and $W_1$, $L_1$ and $N_1$ be left unspecified. By inspection we can
  conclude that $\fml{T}=\{\oplus\}$, independently of the values taken
  by $W_1$, $L_1$ and $N_1$. 
\end{example}

\subsection{Selecting $\fml{T}$}
%
Given $\mbf{z}\in\mbb{F}'$, what should the value of $\fml{T}$ be?
One possible solution is for $\fml{T}$ to be user-specified, e.g.\ it
represents a set of classes (or a class) of interest to the user.
Another solution is to use the definition above (see~\eqref{eq:defkp})
for computing $\kappa'(\mbf{z})$. One obstacle to exploiting this
solution in practice is that the number of (completely specified)
points to analyze is worst-case unbounded on the size of
$\fml{U}$~\footnote{%
Observe that, for features with discrete finite domains, the
worst-case would be exponential. However, we also account for
real-valued features. In such a situation, the worst-case number of 
points in feature space is unbounded.}.
%
In practice, a simpler solution is to construct $\fml{T}$, as follows.
\begin{enumerate}[nosep]
\item Set $\fml{T}=\emptyset$;
\item For each $c\in\fml{K}$, decide whether the following statement
  is consistent:
  \begin{equation}
    \exists(\mbf{x}\in\mbb{F}).\left(\land_{i\in\fml{S}}(x_i=z_i)\right)\land(c=\kappa(\mbf{x}))
  \end{equation}
\item If the statement above is consistent, then add $c$ to
  $\fml{T}$.
\end{enumerate}
For example, for families of classifiers for which computing one
explanation is in
P~\cite{darwiche-ijcai18,msgcin-icml21,iims-jair22},
then the proposed solution computes $\fml{T}$ in polynomial time, even
if the number of points in feature space covered by $\mbf{z}$ is
unbounded 
in the worst case.

The flexibility in the choice of $\fml{T}$, i.e.\ either inferred from
$\mbf{z}$ or specified by the user, is illustrated in the following
example.

\begin{example} \label{ex:runex03}
  For the DT in~\cref{fig:runex}, the user could specify
  $\mbf{z}_3=(35.0, 0.25, \mfrak{u}, 100.0, 36.3, 5.0)$, with
  $\kappa'(\mbf{z}_3)=\{\oplus\}$.
  However, it might be the case that the user would be content to
  accept a more general explanation for
  $\fml{T}=\{\oplus,\oplus\oplus\}$.
\end{example}

\jnoteF{Illustrate difference with DT example.}

Observe that considering missing inputs with several classes
in $\fml{T}$, allows for a more general (still rigorous) explanation,
able to explain all the classes in $\fml{T}$ (simultaneously).
Otherwise, for each of the classes in $\fml{T}$, the user would
have to obtain completely specified instances (for example by
imputation of the missing values), and only then compute
explanations (obtaining one explanation for each of the
specified instances, i.e, one explanation for each class in $\fml{T}$).
The user would end up with a set of explanations, instead of only one.

\subsection{Explaining $\fml{T}$}

Depending on the selected set of specified features, such set may or
may not suffice for a given prediction, i.e.\ there are not (or there
are) assignments to the non-specified features that cause the
prediction to change.
\begin{definition}
Given $\mbf{z}\in\mbb{F}'$, and associated partition of $\fml{F}$,
$\fml{U}$ and $\fml{S}$, we say that $\mbf{z}$ is \emph{sufficient for
the prediction} $\fml{T}$ if,
\begin{equation} \label{eq:suffpred}
  \forall(\mbf{x}\in\mbb{F}).\left(\land_{i\in\fml{S}}(x_i=z_i)\right)\limply\left(\kappa(\mbf{x})\in\fml{T}\right)
\end{equation}
If~\eqref{eq:suffpred} does not hold, then $\mbf{z}$ is not sufficient
for the prediction.
\end{definition}

Observe that $\fml{T}$ can either be computed from $\mbf{z}$, or it
can be user-specified. Also, if $\fml{T}$ is computed from $\mbf{z}$
it may deemed interesting or non-interesting by a human decision
maker.

In the remainder of the paper, we focus on the cases where $\mbf{z}$
is sufficient for the prediction given $\fml{T}$. Otherwise, the
model and $\mbf{z}$ do not suffice to answer ''Why?''~and
''Why~Not?''~questions regarding the prediction, i.e.\ we would have to
impute values to the features in $\fml{U}$.
Also, we could conceivably expand $\fml{T}$ so that $\mbf{z}$ would become sufficient
for the prediction.

%
Given the definition of $\mbf{z}$ that is sufficient for the
prediction, a set $\fml{X}$ of (specified) features,
$\fml{X}\subseteq\fml{S}$, is an abductive explanation (AXp) if
$\fml{X}$ is sufficient for the prediction (in this case $\fml{T}$)
and $\fml{X}$ is irreducible.
Thus, an AXp is a minimal set $\fml{X}\subseteq\fml{S}\subseteq{F}$
such that,
\begin{equation}
  \forall(\mbf{x}\in\mbb{F}).\left(\land_{i\in\fml{X}}(x_i=z_i)\right)\limply\left(\kappa(\mbf{x})\in\fml{T}\right)
\end{equation}
Any set $\fml{X}$ respecting the statement above is a \emph{weak} AXp
(or WAXp).
Similarly, we can define a contrastive explanation given some
(specified) features.
A CXp is a minimal set $\fml{Y}\subseteq\fml{S}\subseteq{F}$ such
that,
\begin{equation}
  \exists(\mbf{x}\in\mbb{F}).\left(\land_{i\in\fml{S}\setminus\fml{Y}}(x_i=z_i)\right)\land\left(\kappa(\mbf{x})\not\in\fml{T}\right)
\end{equation}
i.e.\ the features in $\fml{Y}$ (and also in $\fml{U}$) are allowed
to take any value in their domain, and this suffices for obtaining a
prediction not in $\fml{T}$.
Any set $\fml{Y}$ respecting the statement above is a \emph{weak} CXp
(or WCXp).
We use predicates $\waxp$ and $\wcxp$ to refer to sets that may or
may not satisfy the definitions of weak AXp/CXp.

One key observation is that the definitions above continue to exhibit
important properties of explanations that have been identified in
earlier work.

\begin{proposition}[Monotonicity] \label{prop:monomf}
  Given $\mbf{z}$ (and so a partition of $\fml{F}$, $\fml{S}$ and
  $\fml{U}$) that is sufficient for the prediction
  $\fml{T}\subsetneq\fml{K}$, then $\waxp$ and $\wcxp$ are monotone
  and up-closed.
\end{proposition}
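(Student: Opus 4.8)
The plan is to establish both claims --- for $\waxp$ and for $\wcxp$ --- by mirroring the monotonicity argument already sketched for the completely-specified definitions in \cref{sec:prelim}, adapting it to the restriction that candidate sets now live inside the specified features $\fml{S}$. Since ``monotone and up-closed'' here means that enlarging a weak AXp (resp.\ weak CXp) within $\fml{S}$ preserves the property, it suffices to fix arbitrary sets $\fml{X}\subseteq\fml{X}'\subseteq\fml{S}$ (resp.\ $\fml{Y}\subseteq\fml{Y}'\subseteq\fml{S}$) and show that $\waxp(\fml{X})$ implies $\waxp(\fml{X}')$ (resp.\ $\wcxp(\fml{Y})$ implies $\wcxp(\fml{Y}')$).

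For the AXp direction, I would argue directly from the defining universally-quantified implication. Suppose $\waxp(\fml{X})$ holds and take any $\mbf{x}\in\mbb{F}$ satisfying the antecedent $\bigwedge_{i\in\fml{X}'}(x_i=z_i)$. Because $\fml{X}\subseteq\fml{X}'$, this assignment also satisfies $\bigwedge_{i\in\fml{X}}(x_i=z_i)$, so the hypothesis $\waxp(\fml{X})$ yields $\kappa(\mbf{x})\in\fml{T}$. As $\mbf{x}$ was arbitrary, $\waxp(\fml{X}')$ holds. Intuitively, fixing additional coordinates to their $\mbf{z}$-values only shrinks the set of feature-space points ranged over by the universal quantifier, so the guarantee $\kappa(\mbf{x})\in\fml{T}$ can only become easier to maintain.

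For the CXp direction, the key observation is the order reversal induced by complementation within $\fml{S}$: from $\fml{Y}\subseteq\fml{Y}'$ we obtain $\fml{S}\setminus\fml{Y}'\subseteq\fml{S}\setminus\fml{Y}$. Hence any witness $\mbf{x}\in\mbb{F}$ certifying $\wcxp(\fml{Y})$ --- one satisfying $\bigwedge_{i\in\fml{S}\setminus\fml{Y}}(x_i=z_i)$ together with $\kappa(\mbf{x})\notin\fml{T}$ --- automatically satisfies the weaker constraint $\bigwedge_{i\in\fml{S}\setminus\fml{Y}'}(x_i=z_i)$, and therefore witnesses $\wcxp(\fml{Y}')$ as well. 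Thus the same point $\mbf{x}$ transfers, and no new existential search is needed.

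I do not expect a genuine obstacle here: both arguments are immediate from the logical form of the definitions, and the sufficiency-for-the-prediction assumption plays only a background role (it guarantees $\waxp(\fml{S})$, so that the family of weak AXp's is nonempty and the up-closure is non-vacuous). The only point requiring mild care is to keep every candidate set inside $\fml{S}$ and to track the complementation correctly in the CXp case, so that the inclusion flips in the intended direction.
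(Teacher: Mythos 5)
Your proposal is correct and follows essentially the same route as the paper's proof: a direct argument from the quantified definitions, showing that the universal guarantee transfers to supersets for $\waxp$ and that the existential witness transfers (via the complementation $\fml{S}\setminus\fml{Y}'\subseteq\fml{S}\setminus\fml{Y}$) for $\wcxp$. If anything, your explicit handling of the order reversal in the CXp case is stated more carefully than in the paper, whose wording at that step contains a small slip.
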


\begin{proof}\footnote{Due to lack of space, most of the proofs are
  included in the supplementary materials.}
  By definition of $\mbf{z}$ being sufficient for the prediction, and
  because $\kappa$ is surjective, then it is immediate that
  $\waxp(\fml{S})$ and $\wcxp(\fml{S})$ hold. 
  For both $\waxp$ and $\wcxp$, it is also plain that if the
  predicates hold for some set $\fml{I}\subseteq\fml{S}$, then the
  predicates will hold for any
  $\fml{I}\subseteq\fml{I}'\subseteq\fml{S}$.
  In the case of weak AXp's, if $\fml{I}'\supseteq\fml{I}$, then any
  point $\mbf{x}\in\mbb{F}$ consistent with $\fml{I}'$ will also be
  consistent with $\fml{I}$.
   Since $\waxp(\fml{I})$ holds, then
  $\waxp(\fml{I}')$ also holds, i.e.\ the prediction does not change.
  In the case of weak CXp's, if $\fml{I}'\supseteq\fml{I}$, then any
  point $\mbf{x}\in\mbb{F}$ consistent with $\fml{I}$ will also be
  consistent with $\fml{I}$. Since $\wcxp(\fml{I})$ holds, then
  $\wcxp(\fml{I}')$ also holds, i.e.\ the prediction can be changed.
\end{proof}

This property allows defining AXp's/CXp's using~\eqref{eq:axp3}
and~\eqref{eq:cxp3} as before but considering the
generalizations proposed as follows:
\begin{align}
  \axp(\fml{X}) & :=
  \waxp(\fml{X})
  \land_{t\in\fml{X}}\neg\waxp(\fml{X}\setminus\{t\})
  \\
  \cxp(\fml{Y}) & :=
  \wcxp(\fml{Y})
  \land_{t\in\fml{Y}}\neg\wcxp(\fml{Y}\setminus\{t\})
\end{align}

Let $\mbf{z}\in\mbb{F}'$ (with associated partition of $\fml{F}$ into
$\fml{U}$ and $\fml{S}$), and let $\fml{S}$ be sufficient for 
$\fml{T}\subseteq\fml{K}$. Then, for the generalized explanation
problem $\fml{E}'=(\fml{C},(\mbf{z},\fml{T}))$ we define,
\begin{align}
  \mbb{A}_{\mfrak{u}}(\fml{E}') & = \{\fml{X}\subseteq\fml{S}\,|\,\axp(\fml{X})\}\\
  \mbb{C}_{\mfrak{u}}(\fml{E}') & = \{\fml{Y}\subseteq\fml{S}\,|\,\cxp(\fml{Y})\}
\end{align}

Given the above, the important property of duality between
explanations can be generalized as follows.
\begin{restatable}[Duality relation]{proposition}{propdualmf}
  \label{prop:dualmf}
  Each AXp in $\mbb{A}_{\mfrak{u}}(\fml{E}')$ is a minimal hitting set
  (MHS) of the CXp's in $\mbb{C}_{\mfrak{u}}(\fml{E}')$ and
  vice-versa.
\end{restatable}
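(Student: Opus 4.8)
The plan is to reduce this to Reiter's classical minimal-hitting-set duality between two monotone predicates related by set complementation, exactly as underlies \cref{prop:dual}, but taking care that the ground set is now the specified part $\fml{S}$ rather than all of $\fml{F}$. The two ingredients I would assemble are the monotonicity and up-closedness established in \cref{prop:monomf}, together with a complementation identity linking $\waxp$ and $\wcxp$ over subsets of $\fml{S}$.

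First I would establish the identity $\wcxp(\fml{Y})\iff\neg\waxp(\fml{S}\setminus\fml{Y})$ for every $\fml{Y}\subseteq\fml{S}$. To see this, instantiate the weak-AXp predicate at $\fml{X}=\fml{S}\setminus\fml{Y}$: it asserts that every $\mbf{x}\in\mbb{F}$ agreeing with $\mbf{z}$ on $\fml{S}\setminus\fml{Y}$ satisfies $\kappa(\mbf{x})\in\fml{T}$. Negating this universal statement yields exactly the existential statement defining $\wcxp(\fml{Y})$, namely that some $\mbf{x}$ agreeing with $\mbf{z}$ on $\fml{S}\setminus\fml{Y}$ has $\kappa(\mbf{x})\notin\fml{T}$. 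The point I would stress is that in both definitions the unspecified features $\fml{U}$ are left free, so they are quantified identically on the two sides and drop out of the complementation; this is precisely what makes the identity survive the move to partially specified inputs, and I expect verifying this quantifier bookkeeping to be the only delicate step.

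Next I would prove the bridging lemma: a set $\fml{X}\subseteq\fml{S}$ satisfies $\waxp(\fml{X})$ if and only if $\fml{X}$ hits every weak CXp (equivalently, every CXp). For one direction, if $\waxp(\fml{X})$ held but $\fml{X}\cap\fml{Y}=\emptyset$ for some weak CXp $\fml{Y}$, then $\fml{X}\subseteq\fml{S}\setminus\fml{Y}$, so up-closedness would give $\waxp(\fml{S}\setminus\fml{Y})$, contradicting $\wcxp(\fml{Y})$ via the identity. Conversely, if $\neg\waxp(\fml{X})$, the identity makes $\fml{S}\setminus\fml{X}$ a weak CXp disjoint from $\fml{X}$, so $\fml{X}$ fails to be a hitting set. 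Since every weak CXp contains a (minimal) CXp, hitting all CXps is the same as hitting all weak CXps.

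Finally I would read off the duality. By the lemma the weak AXps are exactly the hitting sets of $\mbb{C}_{\mfrak{u}}(\fml{E}')$; imposing subset-minimality on both descriptions and invoking the lemma once more shows that $\fml{X}$ is an AXp iff $\fml{X}$ is a minimal hitting set of the CXps — any proper subset of an AXp fails $\waxp$ and hence, by the lemma, fails to hit some CXp, and conversely any minimal hitting set satisfies $\waxp$ while none of its proper subsets does. The symmetric claim follows because the complementation identity can equally be read as $\waxp(\fml{X})\iff\neg\wcxp(\fml{S}\setminus\fml{X})$, so swapping the roles of $\waxp$ and $\wcxp$ repeats the argument verbatim and yields that each CXp is a minimal hitting set of $\mbb{A}_{\mfrak{u}}(\fml{E}')$.
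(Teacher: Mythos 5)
Your proof is correct and follows essentially the same route as the paper's: the complementation identity $\wcxp(\fml{Y})\iff\neg\waxp(\fml{S}\setminus\fml{Y})$ over the ground set $\fml{S}$ is exactly what drives the paper's two auxiliary lemmas (hitting sets of $\mbb{A}_{\mfrak{u}}(\fml{E}')$ are weak CXp's and vice-versa), and your disjointness and minimality-transfer arguments mirror the paper's. The only difference is packaging: you state a single biconditional bridging lemma, which incidentally also yields the full converse characterization (every MHS of the CXp's is an AXp), slightly more than the proposition claims.
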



\begin{example} \label{ex:runex04}
  With respect to the DT of~\cref{fig:runex},
  with ~$\mbf{z}=(35.0,0.25,\mfrak{u},100.0,\mfrak{u},5.0)$
  from~\cref{ex:runex02},
  we can identify by inspection weak AXp's and also plain AXp's.
  %
  %
  A possible explanation is of course $\fml{S}=\{1,2,4,6\}$, i.e.\ the
  non-missing feature values. However, the paper's results allow us to
  compute far simpler (and still rigorous) explanations. These include
  $\{4,6\}$ (representing the literals $\tn{PLT}=100$ and
  $\tn{WBC}=5.0$), and also $\{1,4\}$ (representing the literals
  $\tn{HCT}=35.0$ and $\tn{PLT}=100$).
  Given the set of AXp's
  $\mbb{A}_{\mfrak{u}}(\fml{E}')=\{\{4,6\},\{1,4\}\}$ then, 
  by MHS duality, we obtain
  $\mbb{C}_{\mfrak{u}}(\fml{E}')=\{\{4\},\{1,6\}\}$.
  As an example, if we allow $\tn{PLT}$ to take any value, then by
  picking $\tn{PLT}>193$ and by setting $\tn{Neutr.}$ (which is
  unspecified) to a suitable value (e.g.\ $>4.9$), we manage to change
  the prediction to a value not in $\{\oplus,\oplus\oplus\}$.
\end{example}


%

\subsection{Computing Explanations}

For some logic theory $\mbb{T}$, we use $\llbracket\cdot\rrbracket$ to
denote the encoding of a logic statement into statements of $\mbb{T}$.
The definition of AXp (or CXp) can be used to devise a generic
approach for computing one AXp (or CXp)~\cite{msi-aaai22}.
Given a target set of features $\fml{W}$ (instead of $\fml{X}$,
\eqref{eq:axp} (resp.~\eqref{eq:cxp}) can be reformulated as the
following statement being false (resp.~true): 
\begin{equation} \label{eq:consistent}
  \consistent\left(\left\llbracket\left(\bigwedge\nolimits_{i\in\fml{W}}(x_i=z_i)\right)\land(\kappa(\mbf{x})\not\in\fml{T})\right\rrbracket\right)
\end{equation}
where $\fml{T}=\kappa'(\mbf{z})$. 

Furthermore, one can define abstract predicates for iteratively
computing AXp's/CXp's, as follows:
\begin{align} \label{eq:predaxp}
  \predicate_{\tn{axp}}(\fml{W};&\mbb{T},\fml{F},\fml{S},\kappa,\mbf{z},\fml{T})
  \triangleq \nonumber \\
  \neg\consistent&\left(\left\llbracket\left(\bigwedge\limits_{i\in\fml{W}}(x_i=z_i)\right)\land(\kappa(\mbf{x})\not\in\fml{T})\right\rrbracket\right)
\end{align}
\begin{align} \label{eq:predcxp}
  \predicate_{\tn{cxp}}(\fml{W};&\mbb{T},\fml{F},\fml{S},\kappa,\mbf{z},\fml{T})
  \triangleq \nonumber\\
  \consistent&\left(\left\llbracket\left(\bigwedge\limits_{i\in\fml{S}\setminus\fml{W}}(x_i=z_i)\right)\land(\kappa(\mbf{x})\not\in\fml{T})\right\rrbracket\right)
\end{align}

\jnoteF{Include simple extraction algorithm, starting from seed.}

Given the above, \Cref{alg:onexp} illustrates the computation of one
AXp or one CXp. The predicate to be used, for computing either one
AXp or one CXp, represents part of the algorithm's parameterization.
\begin{algorithm}[t]
  \begin{flushleft}
  \hspace*{\algorithmicindent}
  \textbf{Input}: {
    Seed $\fml{R}\subseteq\fml{S}$,
    parameters $\predicate$,
    $\mbb{T}$, $\fml{F}$, $\fml{S}$, $\kappa$, $\mbf{z}$, $\fml{T}$}\\
  \hspace*{\algorithmicindent}
  \textbf{Output}: {One XP $\fml{W}$}
\end{flushleft}

\begin{algorithmic}[1]
  \Procedure{$\onexp$}{$\fml{R};\mbb{P},\mbb{T},\fml{F},\fml{S},\kappa,\mbf{z},\fml{T}$}
  \State{$\fml{W} \gets \fml{R}$}
  \Comment{Initialization: $\mbb{P}(\fml{W})$ holds}
  \For{$i\in\fml{R}$}
  \Comment{Loop invariant: $\mbb{P}(\fml{W})$ holds}
    \If{$\predicate(\fml{W}\setminus\{i\}; \mbb{T},\fml{F},\fml{S},\kappa,\mbf{z},\fml{T})\,{=}\,\top$}
    \State{$\fml{W} \gets \fml{W}\setminus\{i\}$}
    \Comment{If $\mbb{P}(\fml{W}\setminus\{i\})$, update $\fml{W}$}
    \EndIf
  \EndFor
  \State{\bfseries{return}~{$\fml{W}$}}
  \Comment{Returned set $\fml{W}$: $\mbb{P}(\fml{W})$ holds}
\EndProcedure
\end{algorithmic}
%
%

  \caption{Finding one AXp/CXp} \label{alg:onexp}
\end{algorithm}

\begin{restatable}{proposition}{proponexp} \label{prop:onexp}
  For completely specified inputs, let the complexity of deciding
  whether $\fml{W}\subseteq\fml{F}$ is an AXp/CXp be $\mfrak{C}$.
  Then, for partially specified inputs sufficient for the prediction
  $\fml{T}$, the complexity of deciding whether $\fml{W}\in\fml{F}$ is
  an AXp/CXp is $\mfrak{C}$.
\end{restatable}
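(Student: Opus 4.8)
The plan is to establish two matching bounds: an upper bound showing that the partial-case decision reduces to the very same consistency oracle that drives the complete-case decision, and a lower bound observing that the complete case is merely a special case of the partial one. Throughout I keep the classification function $\kappa$ and the feature space $\mbb{F}$ fixed, so that any change in complexity could only come from how the explanation query is phrased, not from the model itself.

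For the upper bound, I would first invoke the monotonicity established in \cref{prop:monomf}, so that $\fml{W}\subseteq\fml{S}$ is an AXp exactly when $\waxp(\fml{W})$ holds while $\waxp(\fml{W}\setminus\{t\})$ fails for every $t\in\fml{W}$; testing this requires $1+|\fml{W}|$ evaluations of $\predaxp$ as defined in \eqref{eq:predaxp}, and symmetrically for CXp via $\predcxp$ in \eqref{eq:predcxp}. The crux is that each such evaluation is a single consistency query of exactly the same logical shape as the one underlying the complete-case decision: fix a chosen subset of features to constants and ask whether some $\mbf{x}\in\mbb{F}$ attains a class outside a designated target set. Passing from the complete to the partial setting only replaces the constants $v_i$ by $z_i$, the index set $\fml{F}$ by its subset $\fml{S}$, and the forbidden class $c$ by the forbidden set $\fml{T}$; in every such query the unconstrained features (those of $\fml{U}$, together with the features outside $\fml{W}$) appear as free variables, precisely as the non-fixed features already do in the complete case.

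The one point that needs care is the replacement of the single class $c$ by the set $\fml{T}$. Here I would note that the complete-case query already tests membership in the complement of a singleton, i.e.\ $\kappa(\mbf{x})\neq c$ is $\bigvee_{c'\in\fml{K}\setminus\{c\}}\kappa(\mbf{x})=c'$, and that $|\fml{K}|$ is a fixed constant of the classifier. Hence rewriting $\kappa(\mbf{x})\notin\fml{T}$ as $\bigvee_{c'\in\fml{K}\setminus\fml{T}}\kappa(\mbf{x})=c'$ produces at most a constant number of single-target consistency checks, no more than in the complete case, so the complexity of a single $\predaxp$/$\predcxp$ evaluation is unchanged; since the overall decision uses only polynomially many of them, the partial-case decision lies in $\mfrak{C}$.

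For the matching hardness I would take any completely specified instance and view it as the partial instance with $\fml{U}=\emptyset$ and $\fml{T}=\{c\}$; under these settings the partial-case predicates collapse verbatim to \eqref{eq:axp} and \eqref{eq:cxp}, so the complete-case decision is a special case of the partial-case decision and every $\mfrak{C}$-hardness result transfers. Combining the two bounds yields complexity exactly $\mfrak{C}$. I expect the main obstacle to be precisely the $\fml{T}$-versus-$c$ step: one must argue that constraining the output to lie in an arbitrary subset of classes is no harder than the constraint already present in the complete case, rather than, say, incurring an extra quantifier alternation, and the observation that $|\fml{K}|$ is constant is exactly what makes this go through.
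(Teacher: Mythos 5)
Your proposal is correct and follows essentially the same route as the paper's (much terser) proof sketch: the decision reduces to the same (in)consistency oracle, with the only changes being the seed/fixed set ($\fml{S}$ in place of $\fml{F}$) and the target ($\fml{T}$ in place of $\{c\}$), which is orthogonal to the per-family predicate. You go further than the paper in two useful ways --- you make the hardness direction explicit by noting the complete case is the special case $\fml{U}=\emptyset$, $\fml{T}=\{c\}$, and you explicitly justify that replacing the single class $c$ by the set $\fml{T}$ costs at most the same constant-size disjunction of single-class consistency checks already implicit in $\kappa(\mbf{x})\neq c$ --- both of which the paper's sketch silently assumes.
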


\begin{proof}(Sketch)
  Deciding whether $\fml{W}$ is an AXp/CXp involves checking
  (in)consistency of changing the prediction when some of the features
  are fixed. As shown in~\cref{alg:onexp}, it suffices to change the
  seed that bootstraps the AXp/CXp extraction algorithm. As can be
  concluded, this change to the AXp/CXp extraction algorithm is
  orthogonal to the predicate used for deciding the weak AXp/CXp
  condition, and so can be used with any of the classifiers studied in
  recent years. Observe that the same argument can be used with
  polynomial time explanation
  algorithms~\cite{msgcin-icml21,iims-jair22}. 
\end{proof}

The same observations yield the following result.

\begin{corollary}
  Consider partial specified inputs sufficient for a prediction
  $\fml{T}$.
  Then, the complexity of computing one AXp/CXp remains unchanged
  independently of whether some features are incompletely specified.
\end{corollary}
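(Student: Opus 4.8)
The plan is to derive the corollary as a short composition argument on top of \Cref{prop:onexp}. First I would recall that computing one explanation is carried out by \Cref{alg:onexp}: starting from a seed and scanning each candidate feature exactly once, each iteration performs a single invocation of the weak predicate $\predaxp$/$\predcxp$, i.e.\ one consistency query of the form~\eqref{eq:consistent}. Hence the total work for \emph{computing} one AXp/CXp is $O(|\fml{S}|)$ such queries plus linear bookkeeping, so the governing complexity is that of a single weak-predicate query scaled by a linear factor. The first step of the proof is therefore to state explicitly that the cost of producing one explanation is dominated by the per-query cost, which is exactly the quantity $\mfrak{C}$ controlled by \Cref{prop:onexp}.

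Next I would argue that passing to partially specified inputs leaves each of these ingredients intact. By \Cref{prop:monomf}, when $\mbf{z}$ is sufficient for $\fml{T}$ the predicates $\waxp$/$\wcxp$ remain monotone and up-closed, so the greedy deletion loop of \Cref{alg:onexp} is still correct once the seed and the iteration range are restricted to the specified features $\fml{S}$ rather than to $\fml{F}$. Since $|\fml{S}|\le|\fml{F}|$, the number of oracle calls stays linear, and the only change to the query~\eqref{eq:consistent} is that the single-class constraint $(\kappa(\mbf{x})=c)$ is replaced by the membership constraint $(\kappa(\mbf{x})\not\in\fml{T})$, with the conjunction ranging over $\fml{W}\subseteq\fml{S}$.

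The one point demanding care—and the main, if modest, obstacle—is verifying that this membership constraint can be encoded into $\mbb{T}$ without leaving the complexity class $\mfrak{C}$. Here I would observe that $\fml{K}$ is finite and fixed, so $\llbracket\kappa(\mbf{x})\not\in\fml{T}\rrbracket$ is the disjunction $\bigvee_{c'\in\fml{K}\setminus\fml{T}}\llbracket\kappa(\mbf{x})=c'\rrbracket$ over a constant number of classes. This inflates the encoding by at most a constant factor relative to the single-class case and therefore preserves membership in $\mfrak{C}$; this is precisely the content already established in \Cref{prop:onexp}, so each query costs $\mfrak{C}$ exactly as in the completely specified setting.

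Finally I would conclude by composition: a linear number of queries, each of complexity $\mfrak{C}$ identical to the completely specified case, yields the same overall complexity for computing one AXp/CXp. Because the entire argument is orthogonal to the particular classifier family—it touches only the seed, the iteration domain, and the constant-factor $\fml{T}$-membership encoding—it applies uniformly across all families studied; in particular, whenever $\mfrak{C}$ is polynomial the whole procedure stays polynomial. Hence the complexity of computing one AXp/CXp is unchanged whether or not some features are left unspecified.
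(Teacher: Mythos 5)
Your proposal is correct and follows essentially the same route as the paper, whose own proof is the one-line observation that the only change to \cref{alg:onexp} is restricting the seed to the specified features; you simply spell out the composition with \cref{prop:onexp} and the constant-factor encoding of the $\fml{T}$-membership constraint in more detail. No gaps.
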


\begin{proof}
  As shown in~\cref{alg:onexp}, the main change to the algorithm is to
  pick seeds that do not involve the non-specified features.
\end{proof}

MARCO-like algorithms~\cite{lpmms-cj16} for the enumeration of both
AXp's and CXp's have been proposed in recent
work~\cite{inams-aiia20,msgcin-icml21}. 
\cref{alg:allxp} shows a modified MARCO-like algorithm for the
enumeration of AXp's/CXp's when some inputs are unspecified.
(Unspecified features are allowed to take any value from their domain,
and so are treated as universal features.)
\begin{algorithm}[t]
  \begin{flushleft}
  \hspace*{\algorithmicindent}
  \textbf{Input}: {
    Parameters $\predaxp$, $\predcxp$,
    $\mbb{T}$, $\fml{F}$, $\fml{S}$, $\kappa$, $\mbf{v}$, $\fml{T}$} 
  \hspace*{\algorithmicindent}
  %
\end{flushleft}

\begin{algorithmic}[1]
  \State{%
    \label{alg:exp:ln01}$\fml{H}\gets\emptyset$}%
  \Comment{$\fml{H}$ defined on set $U=\{u_1,\ldots,u_m\}$}
  \Repeat\label{alg:exp:ln02} 
  \State{%
    \label{alg:exp:ln03}$(\outc,\mbf{u})\gets\SAT(\fml{H})$}
  \If{\label{alg:exp:ln04}$\outc\,{=}\,\top$}
  \State{\label{alg:exp:ln05}$\fml{R}\gets\{i\in\fml{S}\,|\,u_i=0\}$}%
  \Comment{$\fml{R}$: seed, \emph{fixed} features}
  \State{\label{alg:exp:ln06}$\fml{Q}\gets\{i\in\fml{S}\,|\,u_i=1\}$}%
  \Comment{$\fml{Q}$: \emph{universal} features}
  \If{\label{alg:exp:ln07}$\predcxp(\fml{Q};\mbb{T},\fml{F},\fml{S},\kappa,\mbf{v},\fml{T})\,{=}\,\top$}
  \Comment{$\fml{Q}\;\!{\supseteq}\;\!\tn{CXp}$} 
  \State{\label{alg:exp:ln08}$\fml{P}\gets\onexp(\fml{Q};\predcxp,\mbb{T},\fml{F},\fml{S},\kappa,\mbf{v},\fml{T})$}
  \State{\label{alg:exp:ln09}$\prtcxp(\fml{P})$}
  \State{\label{alg:exp:ln10}$\fml{H}\gets\fml{H}\cup\{(\lor_{i\in\fml{P}}\neg{u_i})\}$}
  \Else\label{alg:exp:ln11}
  \Comment{$\fml{R}\,{\supseteq}\,\tn{AXp}$}
  \State{\label{alg:exp:ln12}$\fml{P}\gets\onexp(\fml{R};\predaxp,\mbb{T},\fml{F},\fml{S},\kappa,\mbf{v},\fml{T})$}
  \State{\label{alg:exp:ln13}$\prtaxp(\fml{P})$}
  \State{\label{alg:exp:ln14}$\fml{H}\gets\fml{H}\cup\{(\lor_{i\in\fml{P}}{u_i})\}$}
  %
  \EndIf
  \EndIf
  \Until{\label{alg:exp:ln15}$\outc\,{=}\,\bot$}
\end{algorithmic}

  \caption{Finding all AXp's/CXp's} \label{alg:allxp}
\end{algorithm}
The enumeration of AXp's/CXp's builds on the general purpose algorithm
for computing one AXp/CXp from a seed (see~\cref{alg:onexp})%
\footnote{%
The supplementary material include examples of
executing~\cref{alg:onexp} and~\cref{alg:allxp} on the example DT
of~\cref{fig:runex}.}.
The SAT oracle returns an outcome $\top$ (or true) or $\bot$ (or
false), and an assignment $\mbf{u}$ in case the outcome is $\top$.

\jnoteF{Include simple enumeration algorithm.}

\begin{corollary}
  Consider partial specified inputs sufficient for a prediction
  $\fml{T}$.
  Then, the complexity of enumerating AXp's/CXp's remains unchanged
  independently of whether some features are incompletely specified. 
\end{corollary}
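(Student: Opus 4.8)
The plan is to show that the enumeration procedure of \cref{alg:allxp} inherits, essentially verbatim, the correctness and complexity guarantees of the completely specified case, the only modification being that the combinatorial search is confined to the specified features $\fml{S}$. First I would observe that \cref{alg:allxp} is a MARCO-like scheme whose cost per iteration decomposes into two kinds of operations: a single call to a \SAT oracle on the hitting-set formula $\fml{H}$ (defined on variables indexed by features), and one invocation of \cref{alg:onexp} to extract either an AXp or a CXp from the current seed or universal set. Since both the seed $\fml{R}$ and the universal set $\fml{Q}$ are defined as subsets of $\fml{S}$ (lines \ref{alg:exp:ln05}--\ref{alg:exp:ln06}), the unspecified features $\fml{U}$ never enter the search space: they are folded into the weak-AXp/CXp test through the predicates $\predaxp$/$\predcxp$, which quantify over all of $\mbb{F}$ while fixing only the literals indexed by $\fml{S}$.

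Second, I would invoke \cref{prop:onexp} together with the preceding corollary to conclude that each extraction via \cref{alg:onexp} costs the same as in the completely specified setting: it performs $O(|\fml{S}|)$ evaluations of the weak-AXp/CXp predicate, each of complexity $\mfrak{C}$, and $|\fml{S}|\le|\fml{F}|$. The \SAT query on $\fml{H}$ is likewise posed over at most $|\fml{F}|$ variables, so no single iteration is costlier than before. Hence whatever per-explanation cost, and therefore whatever enumeration delay, held for complete inputs is preserved.

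Third, and this is the crux, I would argue that the loop enumerates exactly the members of $\mbb{A}_{\mfrak{u}}(\fml{E}')\cup\mbb{C}_{\mfrak{u}}(\fml{E}')$ and terminates. Here the generalized duality of \cref{prop:dualmf} plays precisely the role that ordinary MHS duality plays in the original MARCO correctness argument: each AXp is a minimal hitting set of the CXp's and vice-versa, so the blocking clauses added on lines \ref{alg:exp:ln10} and \ref{alg:exp:ln14} prune exactly the explanations already reported (together with the subsets or supersets that cannot yield new minimal sets), the oracle eventually returns $\bot$, and no explanation is missed or repeated. Because the explanations now live in $2^{\fml{S}}$ rather than $2^{\fml{F}}$, the number of iterations is bounded by the number of explanations, which is no larger than in the completely specified case.

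The main obstacle I anticipate is this third step: one must verify that restricting the seed and universal variables to $\fml{S}$, while treating $\fml{U}$ as permanently universal inside the predicates, neither creates spurious minimal hitting sets nor omits genuine ones, i.e.\ that the duality-based bookkeeping of MARCO remains sound under the generalized predicates. This is exactly what \cref{prop:dualmf} delivers, so once that proposition is in hand the argument reduces to observing that \cref{alg:allxp} is structurally identical to its completely specified counterpart, with $\fml{S}$ in place of $\fml{F}$, whence both the delay and the total running time are unchanged.
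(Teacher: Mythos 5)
Your proposal is correct and follows the same route as the paper, whose own proof is only a one-sentence sketch observing that the sole change to \cref{alg:allxp} is to restrict seeds to the specified features; you simply flesh out that observation with the per-iteration cost accounting and the appeal to \cref{prop:dualmf} for the MARCO-style correctness, which is exactly the justification the paper leaves implicit.
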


\begin{proof}(Sketch)
  As shown in~\cref{alg:allxp}, the main change to the algorithm is to
  pick seeds that do not involve 
  non-specified features.
\end{proof}

\jnoteF{Are there special cases, e.g.\ NBCs, d-DNNFs, XpG's?\\
  \tbf{Note:} I've check d-DNNFs and XpG's and I'm certain that the
  proposed algorithms should work with a starting (incompletely
  specified) seed. I'm also fairly convinced about the NBCs algorithm.
  And of course we know it also works for monotonic.\\
  The proof has been edited to clarify why the algorithms work in all
  cases.}

\jnoteF{%
  Claim:\\
  For poly-time algorithms for computing one AXp which refine an
  over approximation of an AXp, $\fml{S}$ can just be viewed as the
  seed that bootstraps the algorithm.
}

\section{Additional Results} \label{sec:xtra}

This section uncovers additional properties of explanations, namely
when inputs are partially specified. These properties allow relating
explanations with different subsets of unspecified features, and can
be used for example for the on-demand enumeration of explanations.
Furthermore, we also briefly investigate extensions of the basic
explainability framework.

\subsection{Properties of Explanations} \label{ssec:xpprop}

Throughout this section we consider a generalized explanation problem
$\fml{E}'_r=(\fml{C},(\mbf{z},\fml{T}))$, with
$\kappa'(\mbf{z})=\fml{T}$. The index $r$ denotes an order relation
on the number of unspecified features, i.e.\ for $\fml{E}'_{r_1}$ and
$\fml{E}'_{r_2}$, with $r_1<r_2$, then $\fml{E}'_{r_1}$ has no more
unspecified features than $\fml{E}'_{r_2}$.
We start by investigating properties of AXp's and CXp's for
explanation problems that refine $\fml{E}'_r$, i.e.\ the set of
specified features $\fml{S}_j$ is contained in $\fml{S}$. Afterwards,
we investigate properties of AXp's and CXp's for explanation problems
that relax $\fml{E}'$, i.e.\ the set of specified features $\fml{S}_j$
contains $\fml{S}_r$.

%
Let $(\mbf{v},c)$ be such that $\mbf{v}\sqsubseteq\mbf{z}$,
$c\in\fml{T}$, and $\kappa(\mbf{v})=c$ ($\kappa'(\mbf{v})=\{c\}\subseteq \fml{T}$).
Moreover, let $\fml{E}'_0=(\fml{C},(\mbf{v},\fml{T}))$ be an
explanation problem given $\fml{E}'_r$ and concretized $\mbf{v}$.
%
We can thus establish the following:
\begin{restatable}{proposition}{proputonou} \label{prop:utonou}
  Given the definitions of $\fml{E}'_r$ and $\fml{E}'_0$ above, then:
  \begin{enumerate}[nosep]
  \item If $\fml{X}\in\mbb{A}_{\mfrak{u}}(\fml{E}'_r)$, then
    $\fml{X}\in\mbb{A}_{\mfrak{u}}(\fml{E}'_0)$,
    i.e.\ $\fml{X}$ is an AXp for $\fml{E}'_0$ and
    $\mbb{A}_{\mfrak{u}}(\fml{E}'_r)\subseteq\mbb{A}_{\mfrak{u}}(\fml{E}'_0)$.
  \item If $\fml{Y}\in\mbb{C}_{\mfrak{u}}(\fml{E}'_r)$, then there
    exists $\fml{Z}\subseteq\fml{U}$ such that
    $\fml{W}=\fml{Y}\cup\fml{Z}\in\mbb{C}_{\mfrak{u}}(\fml{E}'_0)$,
    i.e.\ each $\fml{Y}\cup\fml{Z}$ is a CXp for $\fml{E}'_0$ and
    $\forall(\fml{Y}\in\mbb{C}_{\mfrak{u}}(\fml{E}'_r))\exists(\fml{W}\in\mbb{C}_{\mfrak{u}}(\fml{E}'_0)).\left[\fml{Y}\subseteq\fml{W}\right]$.
  \end{enumerate}
\end{restatable}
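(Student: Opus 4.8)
The plan is to exploit the fact that concretizing $\mbf{z}$ into $\mbf{v}$ leaves the specified features untouched, while turning the unspecified features from ``free by default'' into features one must explicitly place in a contrastive set. Since $\mbf{v}\sqsubseteq\mbf{z}$ and $\mbf{v}\in\mbb{F}$ is completely specified, the partition associated with $\fml{E}'_0$ has \emph{every} feature specified (so its specified set is $\fml{F}$ and its unspecified set is empty), and moreover $v_i=z_i$ for every $i\in\fml{S}$. This single observation drives both parts, and I would state it first.

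For part~1, I would note that for any $\fml{X}\subseteq\fml{S}$ the weak-AXp condition of $\fml{E}'_r$, namely $\forall(\mbf{x}\in\mbb{F}).(\land_{i\in\fml{X}}(x_i=z_i))\limply(\kappa(\mbf{x})\in\fml{T})$, is \emph{literally} the weak-AXp condition of $\fml{E}'_0$ for $\fml{X}$, because $z_i=v_i$ on $\fml{X}\subseteq\fml{S}$. Hence $\waxp$ agrees on subsets of $\fml{S}$ across the two problems. Since an $\fml{X}\in\mbb{A}_{\mfrak{u}}(\fml{E}'_r)$ is itself a subset of $\fml{S}$, all of its proper subsets are again subsets of $\fml{S}$, so the irreducibility requirement in $\axp$ transfers verbatim. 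Therefore $\fml{X}\in\mbb{A}_{\mfrak{u}}(\fml{E}'_0)$, yielding $\mbb{A}_{\mfrak{u}}(\fml{E}'_r)\subseteq\mbb{A}_{\mfrak{u}}(\fml{E}'_0)$.

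For part~2 I would first show that $\fml{Y}\cup\fml{U}$ is a weak CXp for $\fml{E}'_0$: here $\fml{F}\setminus(\fml{Y}\cup\fml{U})=\fml{S}\setminus\fml{Y}$, so fixing these features to $\mbf{v}$ is the same as fixing them to $\mbf{z}$, which reduces the $\fml{E}'_0$ weak-CXp statement for $\fml{Y}\cup\fml{U}$ to exactly the $\fml{E}'_r$ weak-CXp statement for $\fml{Y}$, which holds. I would then pick a subset-minimal $\fml{Z}\subseteq\fml{U}$ for which $\fml{W}=\fml{Y}\cup\fml{Z}$ remains a weak CXp (one exists, since $\fml{Z}=\fml{U}$ works). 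The crux is to argue that $\fml{W}$ is a genuine CXp for $\fml{E}'_0$, i.e. subset-minimal among \emph{all} subsets, not merely among those containing $\fml{Y}$. A proper subset $\fml{W}'\subsetneq\fml{W}$ either keeps all of $\fml{Y}$, in which case $\fml{W}'=\fml{Y}\cup\fml{Z}'$ with $\fml{Z}'\subsetneq\fml{Z}$ and minimality of $\fml{Z}$ rules it out, or it drops some $t\in\fml{Y}$.

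The second, harder case is where I would invoke the contrastive minimality of $\fml{Y}$ in $\fml{E}'_r$: since $\fml{Y}\setminus\{t\}$ is not a weak CXp there, every $\mbf{x}\in\mbb{F}$ agreeing with $\mbf{z}$ on $(\fml{S}\setminus\fml{Y})\cup\{t\}$ satisfies $\kappa(\mbf{x})\in\fml{T}$. Because $\fml{W}'\subseteq\fml{Y}\cup\fml{U}$ and $t\notin\fml{W}'$, the set $\fml{F}\setminus\fml{W}'$ contains $(\fml{S}\setminus\fml{Y})\cup\{t\}$, so fixing $\fml{F}\setminus\fml{W}'$ to $\mbf{v}$ forces agreement with $\mbf{z}$ on $(\fml{S}\setminus\fml{Y})\cup\{t\}$; hence no admissible $\mbf{x}$ can have $\kappa(\mbf{x})\notin\fml{T}$, and $\fml{W}'$ is not a weak CXp. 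This establishes the minimality of $\fml{W}$, so $\fml{W}=\fml{Y}\cup\fml{Z}\in\mbb{C}_{\mfrak{u}}(\fml{E}'_0)$ with $\fml{Z}\subseteq\fml{U}$, as required. I expect this last case to be the main obstacle, namely correctly translating the contrastive minimality of $\fml{Y}$ for the partially-specified problem into the impossibility of dropping a feature of $\fml{Y}$ in the fully-specified problem; the remaining steps are routine bookkeeping with the identity $v_i=z_i$ on $\fml{S}$ and the up-closedness of $\wcxp$ from \cref{prop:monomf}.
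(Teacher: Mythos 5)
Your proposal is correct and follows essentially the same route as the paper's proof: part~1 rests on the observation that the weak-AXp condition transfers verbatim because $v_i=z_i$ on $\fml{S}$, and part~2 on first establishing that $\fml{Y}\cup\fml{U}$ is a weak CXp of $\fml{E}'_0$ and then using the contrastive minimality of $\fml{Y}$ in $\fml{E}'_r$ to show no element of $\fml{Y}$ can be dropped. The only (cosmetic) difference is in part~2: the paper extracts an arbitrary CXp $\fml{W}\subseteq\fml{Y}\cup\fml{U}$ and proves $\fml{Y}\subseteq\fml{W}$, whereas you construct $\fml{W}=\fml{Y}\cup\fml{Z}$ with $\fml{Z}\subseteq\fml{U}$ minimal and verify its minimality case by case; both hinge on the same contradiction.
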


\paragraph{Nested duality.}
\cref{prop:dualmf} and \cref{prop:utonou} reveal an important property
of duality among AXp's and CXp's that holds in general.
The rest of this section assumes the following explanation scenario.
We consider the computation of an AXp $\fml{X}$ of some explanation
problem $\fml{E}'$, starting from $\mbf{z}$, that partitions $\fml{F}$
into $\fml{S}=\fml{S}_r$ and $\fml{U}=\fml{U}_r$.
Moreover, let $r\le{j}\le{r+s}$.
%
In the process of computing the AXp $\fml{X}$, let the computed weak
AXp's be given by the sequence
$\langle\fml{X}_r=\fml{S}_r,\fml{X}_{r+1},\ldots,\fml{X}_{r+s}=\fml{X}\rangle$.
(At each step of executing the AXp extraction algorithm, the set of
fixed features is $\fml{X}_j$.) 
Since for each $\fml{X}_j$, 
some features are left unspecified, then define the explanation
problem $\fml{E}'_j$, 
with $\fml{E}'_r=\fml{E}'$, exhibiting a corresponding set of AXp's
and CXp's,
$\mbb{A}_{\mfrak{u}}(\fml{E}'_j)$ and $\mbb{C}_{\mfrak{u}}(\fml{E}'_j)$.
For each $\fml{E}'_j$, the specified features are
$\fml{S}_j=\fml{X}_j$ and the unspecified features are
$\fml{U}_j=\fml{F}\setminus\fml{X}_j$. 
(Clearly, for $j=r$, $\fml{S}_r=\fml{S}$ and $\fml{U}_r=\fml{U}$.)
%
Given the explanation problem $\fml{E}'$, and the sequence
$\langle\fml{X}_r=\fml{S},\fml{X}_{r+1},\ldots,\fml{X}_{r+s}=\fml{X}\rangle$,
then the following results hold.

\begin{restatable}[Relating AXp's and
    CXp's]{proposition}{propaxpsvscxps} \label{prop:axps_vs_cxps}
  Given the assumptions above, it is the case that,
  \begin{enumerate}
  \item If $\fml{W}\in\mbb{A}_{\mfrak{u}}(\fml{E}'_{k})$, then
    $\fml{W}\in\mbb{A}_{\mfrak{u}}(\fml{E}'_j)$, for any $j,k$ with
    $r\le{j}\le{k}\le{r+s}$;
  \item If $\fml{Y}\in\mbb{C}_{\mfrak{u}}(\fml{E}'_k)$, then there
    exists $\fml{Z}_r\subseteq\fml{X}_r$ such that
    $\fml{Y}\cup\fml{Z}_r\in\mbb{C}_{\mfrak{u}}(\fml{E}'_{j})$, for
    any $j,k$ with $r\le{j}\le{k}\le{r+s}$.
  \end{enumerate}
\end{restatable}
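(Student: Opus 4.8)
The plan is to exploit the fact that every problem $\fml{E}'_j$ in the sequence shares the same partial input $\mbf{z}$ and the same target $\fml{T}$, differing only in the partition: since $\fml{S}_j=\fml{X}_j$ and the sequence $\langle\fml{X}_r,\ldots,\fml{X}_{r+s}\rangle$ is decreasing, $r\le j\le k\le r+s$ forces $\fml{S}_j=\fml{X}_j\supseteq\fml{X}_k=\fml{S}_k$. The single observation that drives everything is that the weak AXp predicate is \emph{insensitive to the ambient set of specified features}: for any $\fml{W}$, the condition $\forall(\mbf{x}\in\mbb{F}).[\bigland_{i\in\fml{W}}(x_i=z_i)]\limply(\kappa(\mbf{x})\in\fml{T})$ only fixes the features of $\fml{W}$ to their $\mbf{z}$-values and checks membership in $\fml{T}$; it never references $\fml{S}_j$ or $\fml{S}_k$. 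Hence $\waxp(\fml{W})$ takes the same truth value in $\fml{E}'_j$ and $\fml{E}'_k$ whenever $\fml{W}\subseteq\fml{S}_k$.

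For part 1, I would take $\fml{W}\in\mbb{A}_{\mfrak{u}}(\fml{E}'_k)$, so that $\fml{W}\subseteq\fml{S}_k\subseteq\fml{S}_j$. Since both $\waxp(\fml{W})$ and the minimality tests $\neg\waxp(\fml{W}\setminus\{t\})$ for $t\in\fml{W}$ are ambient-independent, the defining predicate $\axp(\fml{W})$ of~\eqref{eq:axp3} evaluates identically in $\fml{E}'_j$ and $\fml{E}'_k$; hence $\fml{W}\in\mbb{A}_{\mfrak{u}}(\fml{E}'_j)$. This part is essentially immediate once the insensitivity observation is in place.

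For part 2 the weak CXp predicate \emph{does} depend on the ambient set (through $\fml{S}\setminus\fml{Y}$), so a CXp of $\fml{E}'_k$ must be enlarged by the features that $\fml{E}'_j$ additionally specifies. Given $\fml{Y}\in\mbb{C}_{\mfrak{u}}(\fml{E}'_k)$ with witness $\mbf{x}^{*}$, I would set $\fml{W}_0=\fml{Y}\cup(\fml{S}_j\setminus\fml{S}_k)$ and verify the set identity $\fml{S}_j\setminus\fml{W}_0=\fml{S}_k\setminus\fml{Y}$, so that the very same $\mbf{x}^{*}$ witnesses $\wcxp(\fml{W}_0)$ for $\fml{E}'_j$; thus $\fml{W}_0$ is a weak CXp of $\fml{E}'_j$, from which a genuine CXp $\fml{W}\subseteq\fml{W}_0$ can be extracted. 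It then remains to show $\fml{Y}\subseteq\fml{W}$, after which $\fml{Z}_r=\fml{W}\setminus\fml{Y}\subseteq\fml{S}_j\setminus\fml{S}_k=\fml{X}_j\setminus\fml{X}_k\subseteq\fml{X}_r$ yields the claimed $\fml{Y}\cup\fml{Z}_r\in\mbb{C}_{\mfrak{u}}(\fml{E}'_j)$.

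The main obstacle is precisely this containment $\fml{Y}\subseteq\fml{W}$, which I would establish by contradiction in the style of the (omitted) proof of~\cref{prop:utonou}.2: assuming $I=\fml{Y}\setminus\fml{W}\neq\emptyset$, the witness for $\fml{W}$ being a CXp of $\fml{E}'_j$ fixes every feature of $(\fml{S}_k\setminus\fml{Y})\cup I$ to its $\mbf{z}$-value, because $(\fml{S}_k\setminus\fml{Y})\cup I\subseteq\fml{S}_j\setminus\fml{W}$; this makes $\fml{Y}\setminus I$ a weak CXp of $\fml{E}'_k$ and contradicts the minimality of $\fml{Y}$. The careful bookkeeping of which features are fixed versus free across the two partitions is the only delicate point; everything else is routine. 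In fact, parts 1 and 2 are exactly~\cref{prop:utonou} applied to the pair $(\fml{E}'_k,\fml{E}'_j)$ in place of $(\fml{E}'_r,\fml{E}'_0)$, the only generalization being that the more-specified problem $\fml{E}'_j$ need no longer be fully specified; since that argument uses only the containment $\fml{S}_k\subseteq\fml{S}_j$ and the common values of $\mbf{z}$, it transfers to the nested setting.
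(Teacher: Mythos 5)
Your proof is correct, but it follows a genuinely different route from the paper's. The paper proves both parts by induction along the sequence $\langle\fml{X}_r,\ldots,\fml{X}_{r+s}\rangle$: for each consecutive pair $(\fml{X}_j,\fml{X}_{j+1})$ it examines the single feature $i_j$ whose removal was tested by the extraction algorithm, splitting part~2 into the cases where $i_j$ is, or is not, needed to complete the contrastive explanation (so each increment $\fml{Z}_t$ is either $\emptyset$ or the singleton $\{i_t\}$), and then chains these one-step results by transitivity. You instead argue directly between the two endpoints $\fml{E}'_k$ and $\fml{E}'_j$, using only the containment $\fml{S}_k\subseteq\fml{S}_j$: part~1 via the observation that $\waxp$, and hence $\axp$, never references the ambient set of specified features, and part~2 by padding $\fml{Y}$ with $\fml{S}_j\setminus\fml{S}_k$, checking the identity $\fml{S}_j\setminus\fml{W}_0=\fml{S}_k\setminus\fml{Y}$ so the same witness transfers, extracting a minimal CXp $\fml{W}\subseteq\fml{W}_0$, and recovering $\fml{Y}\subseteq\fml{W}$ by the same minimality contradiction that underlies the proof of \cref{prop:utonou}.2 --- all of which is sound. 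Your version is shorter and strictly more general, since it applies to any nested pair of specified sets rather than only those produced by one-feature-at-a-time deletion; the paper's stepwise version buys slightly finer information, namely that the extra features accumulate as $\fml{Z}_r=\bigcup_{t}\fml{Z}_t$ with each $\fml{Z}_t$ empty or a singleton drawn from the feature dropped at step $t$, which makes the nested-duality picture of the surrounding discussion concrete. Both arguments establish the stated result.
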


Moreover, we also have the following duality result.

\begin{restatable}[Nested Duality]{proposition}{propndual}
  \label{prop:ndual}
  Each AXp in $\mbb{A}_{\mfrak{u}}(\fml{E}'_j)$ is a minimal
  hitting set of the CXp's in $\mbb{C}_{\mfrak{u}}(\fml{E}'_j)$ and
  vice-versa, for $r\le{j}\le{r+s}$. 
\end{restatable}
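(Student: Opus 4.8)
The plan is to reduce \cref{prop:ndual} to the already-established duality relation of \cref{prop:dualmf}, applied separately to each explanation problem $\fml{E}'_j$ in the sequence. The key realization is that, for every $j$ with $r\le{j}\le{r+s}$, the problem $\fml{E}'_j$ is itself a bona fide generalized explanation problem: it shares the classifier $\fml{C}$ and the target set $\fml{T}$, its specified features are $\fml{S}_j=\fml{X}_j$, its unspecified features are $\fml{U}_j=\fml{F}\setminus\fml{X}_j$, and its associated partially specified point $\mbf{z}_j$ is simply the restriction of $\mbf{z}$ that retains $z_i$ for $i\in\fml{X}_j$ and sets the remaining coordinates to $\mfrak{u}$. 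Since $\fml{X}_j\subseteq\fml{S}_r=\fml{S}$, every feature in $\fml{X}_j$ is specified in $\mbf{z}$, so $\mbf{z}_j$ is well defined and $\mbf{z}_j\sqsubseteq\mbf{z}$.

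First I would verify that $\fml{E}'_j$ meets the single precondition needed to invoke \cref{prop:dualmf}, namely that $\mbf{z}_j$ (equivalently, its specified set $\fml{S}_j=\fml{X}_j$) is sufficient for the prediction $\fml{T}$. By construction of the sequence $\langle\fml{X}_r=\fml{S}_r,\ldots,\fml{X}_{r+s}=\fml{X}\rangle$, each intermediate set $\fml{X}_j$ is a \emph{weak} AXp of $\fml{E}'$, i.e.\ $\waxp(\fml{X}_j)$ holds. Written out, $\waxp(\fml{X}_j)$ is exactly
\begin{equation*}
  \forall(\mbf{x}\in\mbb{F}).\left(\land_{i\in\fml{X}_j}(x_i=z_i)\right)\limply\left(\kappa(\mbf{x})\in\fml{T}\right),
\end{equation*}
which is precisely the sufficiency condition \eqref{eq:suffpred} for the partition $(\fml{S}_j,\fml{U}_j)$ induced by $\mbf{z}_j$. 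Hence $\mbf{z}_j$ is sufficient for the prediction $\fml{T}$, and by \cref{prop:monomf} the predicates $\waxp$ and $\wcxp$ associated with $\fml{E}'_j$ are monotone and up-closed.

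With the precondition discharged, I would apply \cref{prop:dualmf} verbatim to $\fml{E}'_j$: each AXp in $\mbb{A}_{\mfrak{u}}(\fml{E}'_j)$ is a minimal hitting set of the CXp's in $\mbb{C}_{\mfrak{u}}(\fml{E}'_j)$, and conversely. As this holds for every $j$ in the range $r\le{j}\le{r+s}$, the nested duality follows immediately. The only genuinely substantive step is the bookkeeping in the previous paragraph---recognizing that the weak-AXp property of the intermediate set $\fml{X}_j$ is literally the sufficiency hypothesis required to treat $\fml{E}'_j$ as a self-contained generalized explanation problem. Once this identification is made, no new argument is needed, so I do not anticipate any real obstacle: all the heavy lifting (the MHS duality itself) was already carried out in \cref{prop:dualmf}, and \cref{prop:axps_vs_cxps} is not even required for this per-level statement.
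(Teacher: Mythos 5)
Your proposal is correct and follows essentially the same route as the paper's own proof: treat each $\fml{X}_j$ as the specified set of a new generalized explanation problem $\fml{E}'_j$ (with the restricted point $\mbf{z}_j$) and invoke \cref{prop:dualmf} for each $j$. In fact you are slightly more careful than the paper, which skips the explicit check that $\waxp(\fml{X}_j)$ holding is exactly the sufficiency precondition \eqref{eq:suffpred} needed to apply \cref{prop:dualmf}; that verification is worth keeping.
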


\rnoteF{%
A possible alternative notation for the explanation problem related to a sequence
of decreasing subsets of fixed features:
\begin{enumerate}[nosep]
\item As the  sequence   is always related to some initial set of fixed features, we could
use a notation like this to stress that each set is a subset of the previous one:
$$ \langle\fml{F}_1=\fml{F},\fml{F}_2,\ldots,\fml{F}_k\rangle, \forall i \fml{F}_{i+1} \subseteq \fml{F}_i $$
But we can indicate also any other sequence with any other starting subset different from
$ \fml{F} $.
\item Then, in the notation for the explanation problem we indicate the related
subset $\fml{F}_j $ of the sequence as the subindex: $\fml{E}'_{\fml{F}_j} $.
In that way, we indicate the starting set of the sequence, and the concrete subset of the
sequence used as fixed set.
\end{enumerate}
}

\jnoteF{%
  What can we say about AXp's, CXp's and their relationships???
}

\jnoteF{%
  General claims, for any $\mbf{v}$ covered by $\mbf{z}$:
  \begin{enumerate}[nosep]
  \item The AXp's of $(\mbf{v},c)$ are a superset of the AXp's of
    $(\mbf{z},\fml{T})$;
  \item The CXp's of $(\mbf{v},c)$ are a subset of the CXp's of
    $(\mbf{z},\fml{T})$.
  \end{enumerate}
}

\begin{table*}[t]
  \begin{center}
    \begin{tabular}{ccccccc} 
      \toprule
      Dataset & $|\tn{DT}|$ & Avg Depth & Term.~Node & Target / Unwanted Classes &
      Specified Features & AXp 
      \\ \toprule
      %
      %
      \texttt{dermatology} & 51 & 6.2 & 32 & $\{1,2,4,5,6\}/\{3\}$ &
      $\{\tn{f4},\tn{f7},\tn{f13},\tn{f19},\tn{f26},\tn{f27},\tn{f33}\}$ &
      $\{\tn{f7},\tn{f26}\}$ 
      \\ \midrule
      %
      %
      \texttt{student-por} & 287 & 7.5 & 48 & $\{14..19\}/\{0..13\}$ &
      $\{\tn{f2},\tn{f9},\tn{f12},\tn{f24},\tn{f29},\tn{f31}\}$ &
      $\{\tn{f31}\}$ 
      \\ \midrule
      %
      %
      \texttt{auto} & 33 & 4.4 & 26 & $\{0..3\}/\{\tn{-3..-1}\}$ &
      $\{\tn{f5},\tn{f8},\tn{f12},\tn{f18}\}$ &
      $\{\tn{f8},\tn{f18}\}$ 
      \\ \midrule
      %
      %
      \texttt{soybean} & 109 & 6.8 & 106 & $\{0,5..18\}/\{1,2,3,4\}$ &
      $\{\tn{f1},\tn{f2},\tn{f14},\tn{f25},\tn{f26},\tn{f28},\tn{f29}\}$ &
      $\{\tn{f14},\tn{f28}\}$ 
      \\ \midrule
      %
      %
      \texttt{zoo} & 41 & 5.3 & 26 & $\{2..7\}/\{1\}$ &
      $\{\tn{f0},\tn{f4},\tn{f6},\tn{f8},\tn{f10},\tn{f12},\tn{f15}\}$ &
      $\{\tn{f0},\tn{f15}\}$ 
      \\ 
      \bottomrule
    \end{tabular}
    \caption{Summary of results} \label{tab:res}
  \end{center}
\end{table*}

\subsection{Generalizations}

\paragraph{Necessity, Relevancy and Irrelevancy.}

A generalization that can be made is with regards to querying
the participation of features in explanations.
Namely the queries of necessity and relevancy~\cite{hcmpms-tacas23},
that is, whether a feature participates in all the explanations of the
output class (the feature is necessary), or the feature belongs to at
least one explanation of the output classification class (the feature is
relevant).

\begin{algorithm}[t]
  \begin{flushleft}
  \hspace*{\algorithmicindent}
  \textbf{Input}: {
    Target feature $t\in\fml{S}$,
    parameters $\fml{F}$, $\fml{S}$, $\kappa$, $\mbf{z}$, $\fml{T}$}\\
  %
\end{flushleft}
\begin{algorithmic}[1]
  \Procedure{$\frp$}{$t;\fml{F},\fml{S},\kappa,\mbf{z},\fml{T}$}
    \State{$\fml{H} \gets \{(s_t)\}$} \Comment{$\vars(\fml{H})=\{s_i\,|\,i\in\fml{S}\}$}
    \Repeat
      \State{$(\outc,\mbf{s})\gets\SAT(\fml{H})$}
      \If{$\outc=\True$}
        \State{$\fml{P} \gets \{i \in \fml{S} \,|\, s_i=1 \}$}
        \State{$\fml{D} \gets \{i \in \fml{S} \,|\, s_i=0 \}$}
        \If{$\neg \waxpui(\fml{P})$}
          \State{$\fml{H} \gets \fml{H}\cup\{(\lor_{i\in{\fml{D}}}s_i)\}$}
          \Comment{Block non-$\waxpui$}
        \ElsIf{$\neg \waxpui(\fml{P} \setminus \{t\})$}
            \State{\Return \True}
            \Comment{$t$ is relevant}
        \Else \Comment{ Block $t$-irrelevant $\waxpui$}
          \State{$\fml{H} \gets \fml{H} \cup \{(\lor_{i\in{\fml{P}\setminus\{t\}}} \neg s_i)\}
          $}
          
        \EndIf
      \EndIf
    \Until{$\outc=\False$}
    \State{\Return \False}
    \Comment{$t$ is irrelevant}
  \EndProcedure
\end{algorithmic}

  \caption{Deciding feature relevancy} \label{alg:gen-frp}
\end{algorithm}

Similar to the case of computing an explanation, querying for necessity
or relevancy of features for explanations does not alter the complexity
of the problem by the inclusion of unspecified inputs.
Consider a classifier for which we wish to query
the relevancy of a feature considering explanations with unspecified inputs.
\cref{alg:gen-frp} proposes an adaptation of the algorithm
in~\cite{hcmpms-tacas23} taking into account unspecified inputs.

\jnoteF{Constraints on the inputs}

\paragraph{Input constraints \& probabilistic explanations.}
Logic-based explanations often assume a uniform distribution over the
inputs, i.e.\ all points in feature space are equally likely and
possible. Recent work proposes to account for explicit constraints on
the inputs~\cite{rubin-aaai22} 
by considering in the definition of a weak AXp an additional map
$\varsigma:\mbb{F}\rightarrow\{0,1\}$ constraining the point
(0 disallows the point, and 1 allows it).
With this modified definition of $\waxp$, it is immediate that, given a generalized explanation problem $\fml{E}'=(\fml{C},(\mbf{z},\fml{T}))$, a $\waxp$ with input constraints can be generalized as follows:
\begin{align} \label{eq:axp4}
  \waxp(\fml{X}) &\! :=\! \forall\mbf{x}\in\mbb{F}.
  \left[
    \varsigma(\mbf{x})\land\bigland\limits_{i\in{\fml{X}}}(x_i=z_i)
    \right]
  \limply \kappa(\mbf{x})\in \fml{T}
\end{align}

\jnoteF{Generalized literals, from~\cite{iims-jair22}?}

\jnoteF{Probabilistic explanations?\\
  Locally minimal PAXp's should in principle be computable from seed.}

%
In a similar vein, it is apparent (and so it is conjectured) that the
computation of probabilistic
explanations~\cite{kutyniok-jair21,ihincms-arxiv23,barcelo-nips22} can be handled in a
similar way.
Recently, \cite{ihincms-arxiv23} proposed the computation of Probabilistic Abductive Explanations.
The authors introduce the concept of weak Probabilistic Abductive Explanation $\fml{X}$ ($\wpaxp(\fml{X})$), which represents the probability of a variable $x$ having the same classification as a given instance $\mbf{v}$, conditioned to $x$ agreeing with the values of $\mbf{v}$ in the features of $\fml{X}$, being larger than a given $\delta$.
Given a generalized explanation problem $\fml{E}'=(\fml{C},(\mbf{z},\fml{T}))$,
the definition of $\wpaxp(\fml{X})$ can be generalized as follows:
\begin{align}\label{eq:wpaxpui}
  \wpaxpui(\fml{X})& :=~  Pr_x(\kappa(x)\in\fml{T}  ~|~x_\fml{X} = z_\fml{X}) \ge \delta
\end{align}

\paragraph{Further generalizations.}
The generalization of predicted class proposed in~\cref{sec:xpsi}
requires that the classifier computes a total function, i.e.\ each
point in feature space maps to a concrete class.
However, the results from~\cref{sec:xpsi} but also
from~\cref{ssec:xpprop} can be generalized to a more general set up
where $\kappa$ can associate multiple classes for each point in
feature space, but also when the class is left unspecified,
i.e.\ $\kappa$ is a map from $\mbb{F}$ into $2^{\fml{K}'}$, with
$\fml{K}'=\fml{K}\cup\mfrak{u}$, and where $\mfrak{u}$ denotes any
class.
For example, this generalization allows for assigning \emph{don't
care} classes to points in feature space which are known not to be
feasible, offering an alternative to recent work~\cite{rubin-aaai22}.

\section{Experiments} \label{sec:res}

\jnoteF{Use sets of classes to obtain smaller and so more intuitive
  explanations.}

\jnoteF{The use of unspecified inputs should simplify the enumeration
  of explanations. That is another advantage of our work.}

\paragraph{Setup.}
To illustrate use cases of the logic-based explainability framework
proposed in this paper, we consider an explainer for decision
trees~\cite{iims-jair22}. In contrast with earlier
work~\cite{iims-jair22}, our explainer navigates the DT and does not
encode to propositional Horn clauses. As in earlier work, the
running times for computing one AXp/CXp are always negligible, since
the explainer runs in polynomial time on the tree size.
Also, the explainer has been implemented in Python. 
The explainer allows a user to declare unspecified features as well
as to list accepted \emph{target} classes (all other classes are referred to
as \emph{unwanted}).

\paragraph{Target/Unwanted classes.}
The goal of our first experiment is to compute smaller (and so easier
to understand~\cite{miller-pr56}) AXp's, such that a given set of
target classes is guaranteed to occur. For example, we
may want to explain why there is some degree of risk associated with
an insurance, but we do not necessarily care about the actual degree
of risk.

Furthermore, the experiments consider example decision trees, obtained
with a commercial tool\footnote{%
We opted to use the optimal decision trees induced with the tree
learner from Interpretable AI~\cite{bertsimas-ml17}, which is
available (on request) from \url{https://www.interpretable.ai/}.}.
A maximum depth of 8 was chosen, so as to keep the trees sufficiently
shallow.
Furthermore, we consider a number of well-known publicly available
datasets,
namely \texttt{dermatology},
\texttt{student-por},
\texttt{auto},
\texttt{soybean},
and
\texttt{zoo}~\footnote{The datasets were obtained from the UCI
repository (\url{https://archive.ics.uci.edu/ml/}), in the case of
\texttt{dermatology}, \texttt{student-por}, \texttt{soybean} and
\texttt{zoo}, and from the PennML repository
(\url{https://epistasislab.github.io/pmlb/}), in the case of
\texttt{auto}. Motivated by the experiments in this section, we opted
for multi-class datasets.},
for each of which all features are categorical\footnote{%
For the purpose of the paper, we will not dwell on the meaning
ascribed to each feature, i.e.\ features will be denoted by $\tn{f}i$,
with $i$ ranging over the number of features.}
Moreover, for each dataset, two sets of (resp.~target/unwanted)
classes were chosen.
  Concretely, for \texttt{dermatology} the unwanted
  class was 3 (i.e.\ \emph{lichen planus}). For \texttt{student-por}
  the unwanted classes were all the grades between 0 and 13 (i.e.\ the
  target are good or above grades). For \texttt{auto} the unwanted
  classes were those of no risk (i.e.\ -3, -2 and -1, but it should be
  noted that the dataset does not have entries predicting -3 or -2).
  For \texttt{soybean}, the unwanted classes were rot diseases
  (i.e. 1, 2, 3 and 4).
  Finally, for \texttt{zoo}, the unwanted class was \emph{mammal},
  i.e.\ we are interested in understanding which features suffice for
  not predicting a \emph{mammal}.
%
In all cases, and given a target terminal node in the DT, the
specified features are selected as those tested in the path from the
root to the terminal node. The unspecified features are all remaining
ones.

%
\cref{tab:res} summarizes the results of computing one AXp.
As can be observed, starting from a non-negligible number of features
(i.e.\ a weak AXp given the unspecified features), we are able to
obtain fairly small AXp's, which explain what must not be changed so
that the unwanted classes are not predicted. Such small explanations
are important, since it is generally accepted that smaller
explanations are simpler for human decision makers to
comprehend~\cite{miller-pr56}.

\paragraph{Assessing an ML classifier.}
As a second experiment, we
detail a case study of a recently proposed
decision tree for non-invasive diagnosis of
Meningococcal Disease (MD)
meningitis~\cite[Figure~9]{belmonte-ieee-access20}.
In this DT, some features refer to
traits of a patient (e.g.\ age, gender, and place where she lives), and
some other refer to symptoms that the patient exhibits (e.g.\ being in
a coma, headache, seizures, vomiting, stiff neck, and existence of
petechiae).
For this DT, that we can see also in~\cref{fig:cstudy} of our technical appendix, we have used our explainer as well as selected
unspecified features to prove that patients exhibiting no symptoms can
be diagnosed with MD meningitis.
This example illustrates a novel use case of logic-based
explainability in assessing how realistic, and possibly misleading,
are the predictions of an ML classifier.

%

%
The proposed procedure operates as follows. For each leaf predicting
the class of interest (i.e.\ a diagnosis of MD meningitis) we identify
the features representing symptoms and, among these, we pick the ones
that are active (i.e.\ the patient exhibits the symptom).
We then declare the other features (non-active or non-symptom) as
specified, and leave the active symptom features as unspecified.
%
If the specified features are sufficient for the prediction, then we
know that
a patient can be diagnosed with MD meningitis despite not exhibiting
any symptoms. Notwithstanding, we proceed to compute an AXp, since
this offers an irreducible set of features sufficient for the unwanted
prediction.

For example, given the following path of the DT that predicts MD
meningitis for the patient:
$$
\begin{array}{rl}
    \langle &
             \textsf{(Age > 5)}=1,
            \textsf{Petechiae}=0,
            \textsf{Stiff Neck}=0,  \\
            &
            \textsf{Vomiting}=1,
            \textsf{Zone}=2,
            \textsf{Seizures}=0, \\
            &
            \textsf{Headache}=0,
            \textsf{Coma}=0,
            \textsf{Gender}=1 \quad \rangle
  \end{array}
$$
We leave as unspecified features only the active symptom (Vomiting)
to look for an AXP.
The  AXP we obtain is:
$$
  \{ \textsf{(Age > 5)}, \textsf{Gender}, \textsf{Headache}, \textsf{Zone}\}
$$
That is, the unique active symptom in this path (Vomiting) is not
really necessary to predict a patient as having MD meningitis,
as it is only necessary for the patient to have age > 5,
reside in zone 2, to exhibit no headache, and have gender 1 (male).
This behavior is obtained also with other paths of the DT, and they are
not easily spotted by a visual inspection of the DT.








\section{Related Work} \label{sec:relw}

Missing data occurs in a number of settings, either in existing
datasets, and so for the training of ML
models~\cite{shamir-icml08,graham-arp09,shamir-ml10,graham-bk12,peng-sp13,little-jpp14,little-ps18,rubin-bk19},
but also for deciding and explaining
predictions~\cite{bro-cils05,rubin-bk19,demoor-cmpb22,zeng-wiseml22}.
Earlier work on explainability focuses on value attribution to missing
feature values as an expected prediction~\cite{vandenbroeck-ijcai19,vandenbroeck-corr20}, 
or as a desired prediction~\cite{ike-corr23}.

In contrast, this paper studies abductive and contrastive explanations
in the presence of missing feature values. Furthermore, and to the
best of our knowledge, past work on logic-based
explanations~\cite{darwiche-ijcai18,inms-aaai19,darwiche-ecai20,marquis-kr20,marquis-kr21,msi-aaai22}
considers fully specified inputs.

\jnoteF{It would be great if we could include additional references and
  detail.\\
  Any volunteers?}

\section{Conclusions} \label{sec:conc}

This paper investigates logic-based explanations in the case some
features are left unspecified, i.e.\ there exists missing data.
This involves extending feature space and generalizing the definition
of classification function. In turn, the proposed solution enables
adapting existing explanation algorithms, both for computing abductive
and contrastive explanations.

The paper also proves a number of additional results related with the
relationship between abductive and contrastive explanations.
The experimental results highlight a number of practical uses, both of
the proposed solution for reasoning with partially specified inputs, but 
also for computing smaller (and so more intuitive) explanations.
More importantly, the experiments also highlight how logic-based
explanations can serve to reveal important issues with induced ML
models.


%
\iftrue
%
%
\ack
This work was supported by the AI Interdisciplinary Institute ANITI,
funded by the French program ``Investing for the Future -- PIA3''
under Grant agreement no.\ ANR-19-PI3A-0004, and by the H2020-ICT38
project COALA ``Cognitive Assisted agile manufacturing for a Labor
force supported by trustworthy Artificial intelligence'',
and funded by the Spanish Ministry of Science and Innovation (MICIN) 
under project PID2019-111544GB-C22/AEI/10.13039/501100011033, 
consolideted research group grant 2021-SGR-01615,
by a Maria Zambrano fellowship, 
and by a Requalification fellowship financed by Ministerio de 
Universidades of Spain and by European Union -- NextGenerationEU.
The problem was first suggested to the authors by R.\ Passos during
discussions about the EU project COALA.
JMS also acknowledges
the incentive provided by the ERC who, by not funding this research
nor a handful of other grant applications between 2012 and 2022, has
had a lasting impact in framing the research presented in this paper.

\fi

\newtoggle{mkbbl}

\settoggle{mkbbl}{false}

\iftoggle{mkbbl}{
  \bibliography{refs,team,mf}
}{
  \input{paper.bibl}
}

\clearpage
\newcommand{\appendixhead}%
{\centering\textbf{\huge Appendix -- Supplementary Materials}
\vspace{0.25in}}

\twocolumn[\appendixhead]
\appendix

\section{Additional Proofs}

\begin{lemma} \label{lemma:hs_vs_Xps_1}
If $\fml{Y}$ is a hitting set of $\mbb{A}_{\mfrak{u}}(\fml{E}')$, then $\fml{Y}$ is a weak CXp of $\fml{E}'$.
\end{lemma}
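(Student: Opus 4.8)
The plan is to argue by contraposition, relying on the definitions of $\waxp$/$\wcxp$ together with the monotonicity established in \cref{prop:monomf}. The first step is to record the elementary equivalence linking the two notions: by the (generalized) definition of $\wcxp$, a set $\fml{Y}\subseteq\fml{S}$ is \emph{not} a weak CXp of $\fml{E}'$ precisely when, for every $\mbf{x}\in\mbb{F}$, fixing all features of $\fml{S}\setminus\fml{Y}$ to the values dictated by $\mbf{z}$ forces $\kappa(\mbf{x})\in\fml{T}$. But that statement is exactly the assertion that $\fml{S}\setminus\fml{Y}$ is a weak AXp. Hence ``$\fml{Y}$ is a weak CXp'' is logically equivalent to ``$\fml{S}\setminus\fml{Y}$ is \emph{not} a weak AXp'', and I would state this equivalence explicitly before anything else.

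With that equivalence in hand, I would prove the contrapositive of the lemma: assuming $\fml{Y}$ is not a weak CXp, I show $\fml{Y}$ is not a hitting set of $\mbb{A}_{\mfrak{u}}(\fml{E}')$. By the equivalence, $\fml{S}\setminus\fml{Y}$ is a weak AXp. Since $\waxp$ is up-closed (\cref{prop:monomf}) and $\fml{S}$ is finite, the upward-closed family of weak AXp's has, below $\fml{S}\setminus\fml{Y}$, at least one subset-minimal element, i.e.\ a genuine AXp $\fml{X}\in\mbb{A}_{\mfrak{u}}(\fml{E}')$ with $\fml{X}\subseteq\fml{S}\setminus\fml{Y}$. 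For this $\fml{X}$ we have $\fml{X}\cap\fml{Y}=\emptyset$, so $\fml{Y}$ fails to hit $\fml{X}$, and therefore $\fml{Y}$ is not a hitting set of $\mbb{A}_{\mfrak{u}}(\fml{E}')$. Taking the contrapositive yields the claim: if $\fml{Y}$ hits every AXp, then $\fml{Y}$ is a weak CXp.

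The only point requiring a moment's care is the extraction of an AXp $\fml{X}\subseteq\fml{S}\setminus\fml{Y}$ from the weak AXp $\fml{S}\setminus\fml{Y}$; this is immediate from up-closedness together with finiteness of $\fml{S}$ (every element of a finite upward-closed family dominates a minimal element of that family), so it is not a genuine obstacle. Indeed the whole statement is one half of the MHS duality of \cref{prop:dualmf} and could be deduced from it, but the self-contained contrapositive argument above is both shorter and more transparent, and it is the version I would write up.
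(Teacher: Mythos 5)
Your proposal is correct and follows essentially the same route as the paper's proof: both arguments reduce to the observation that if $\fml{Y}$ is not a weak CXp then $\fml{S}\setminus\fml{Y}$ is a weak AXp, hence contains an AXp disjoint from $\fml{Y}$, contradicting the hitting-set hypothesis. The only difference is that you phrase it as a contraposition while the paper phrases it as a proof by contradiction, and you make explicit the (finiteness-based) extraction of a minimal AXp from the weak AXp, which the paper merely asserts.
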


\begin{proof}
Suppose that $\fml{Y}$ is a hitting set of $\mbb{A}_{\mfrak{u}}(\fml{E}')$.
By contradiction suppose that:
$$
  \forall(\mbf{x}\in\mbb{F}).
  \left[
  \bigland\nolimits_{i\in {\fml{S} \setminus \fml{Y}}}(x_i = z_i)
  \right]
  \limply
  (\kappa(\mbf{x}) \in \fml{T})
$$
but, then this means that $\fml{S} \setminus \fml{Y}$ is a weak AXp of $\fml{E}'$, and there exists an AXp in $\fml{S} \setminus \fml{Y}$ that belongs to $\mbb{A}_{\mfrak{u}}(\fml{E}')$.
Since $\fml{Y}$ is a hitting set of $\mbb{A}_{\mfrak{u}}(\fml{E}')$ then $\fml{Y} \cap (\fml{S} \setminus \fml{Y}) \neq \emptyset$, which is impossible.
Thus:
$$
  \exists(\mbf{x}\in\mbb{F}).
  \left[
  \bigland\nolimits_{i\in {\fml{S} \setminus \fml{Y}}}(x_i = z_i)
  \right]
  \land
  (\kappa(\mbf{x}) \not\in \fml{T})
$$
that is, $\fml{Y}$ is a weak CXp of $\fml{E}'$.
\end{proof}

\begin{lemma} \label{lemma:hs_vs_Xps_2}
If $\fml{X}$ is hitting set of $\mbb{C}_{\mfrak{u}}(\fml{E}')$, then $\fml{X}$ is a weak AXp of $\fml{E}'$.
\end{lemma}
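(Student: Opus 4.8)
The plan is to dualize the argument used for \cref{lemma:hs_vs_Xps_1}, arguing by contradiction. Assume that $\fml{X}$ is a hitting set of $\mbb{C}_{\mfrak{u}}(\fml{E}')$ but that $\fml{X}$ is \emph{not} a weak AXp of $\fml{E}'$. Negating the defining condition of a weak AXp yields a witnessing point showing that fixing $\fml{X}$ does not force the prediction into $\fml{T}$, namely
$$
  \exists(\mbf{x}\in\mbb{F}).\left[\bigland\nolimits_{i\in\fml{X}}(x_i=z_i)\right]\land(\kappa(\mbf{x})\not\in\fml{T}).
$$

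The first step is to recognize this as exactly the weak CXp condition for the set $\fml{S}\setminus\fml{X}$. Since $\fml{X}\subseteq\fml{S}$, we have $\fml{S}\setminus(\fml{S}\setminus\fml{X})=\fml{X}$, so the very same witness $\mbf{x}$ establishes $\wcxp(\fml{S}\setminus\fml{X})$; hence $\fml{S}\setminus\fml{X}$ is a weak CXp of $\fml{E}'$. The second step is to descend from this weak CXp to an actual (subset-minimal) CXp: because $\fml{S}$ is finite and $\wcxp$ is monotone and up-closed by \cref{prop:monomf}, any weak CXp contains a subset-minimal weak CXp, so there exists $\fml{Y}\in\mbb{C}_{\mfrak{u}}(\fml{E}')$ with $\fml{Y}\subseteq\fml{S}\setminus\fml{X}$. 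From $\fml{Y}\subseteq\fml{S}\setminus\fml{X}$ it follows that $\fml{Y}\cap\fml{X}=\emptyset$, contradicting the assumption that $\fml{X}$ hits every member of $\mbb{C}_{\mfrak{u}}(\fml{E}')$. Therefore $\fml{X}$ must be a weak AXp, as claimed.

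The main obstacle is the single non-routine step, namely guaranteeing that the weak CXp $\fml{S}\setminus\fml{X}$ actually contains a minimal CXp belonging to $\mbb{C}_{\mfrak{u}}(\fml{E}')$; this is where monotonicity (\cref{prop:monomf}) together with finiteness of $\fml{S}$ is essential, since without such structure a weak CXp need not contain a subset-minimal one, and the hitting-set assumption only constrains $\fml{X}$ against the genuine (minimal) CXp's. The remaining steps are purely definitional rewriting, in particular the set-complement identity $\fml{S}\setminus(\fml{S}\setminus\fml{X})=\fml{X}$, and are routine.
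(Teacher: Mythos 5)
Your proof is correct and follows essentially the same route as the paper's: contradiction, the set-complement identity $\fml{S}\setminus(\fml{S}\setminus\fml{X})=\fml{X}$ to read the witness as $\wcxp(\fml{S}\setminus\fml{X})$, extraction of a minimal CXp inside $\fml{S}\setminus\fml{X}$, and a clash with the hitting-set hypothesis. Your version is in fact slightly more careful than the paper's, which leaves the descent from weak CXp to minimal CXp implicit and states the final contradiction a little loosely.
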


\begin{proof}
Suppose that $\fml{X}$ is hitting set of $\mbb{C}_{\mfrak{u}}(\fml{E}')$.
By contradiction suppose that:
$$
  \exists(\mbf{x}\in\mbb{F}).
  \left[
  \bigland\nolimits_{i\in {\fml{X}}} (x_i = z_i)
  \right]
  \land
  (\kappa(\mbf{x}) \not\in \fml{T})
$$
since $\fml{X} = \fml{S} \setminus (\fml{S} \setminus \fml{X})$, then it is the same as:
$$
  \exists(\mbf{x}\in\mbb{F}).
  \left[
  \bigland\nolimits_{i\in \fml{S} \setminus(\fml{S} \setminus \fml{X})} (x_i = z_i)
  \right]
  \land
  (\kappa(\mbf{x}) \not\in \fml{T})
$$
this means that $\fml{S} \setminus \fml{X}$ is a weak CXp of $\fml{E}'$, and there exists a CXp inside $\fml{S} \setminus \fml{X}$ that belongs to $\mbb{C}_{\mfrak{u}}(\fml{E}')$.
Since $\fml{X}$ is a hitting set of $\mbb{C}_{\mfrak{u}}(\fml{E}')$ then $\fml{X} \cap (\fml{S} \setminus \fml{X}) \neq \emptyset$, which is impossible.
Thus:
$$
  \forall(\mbf{x}\in\mbb{F}).
  \left[
  \bigland\nolimits_{i\in \fml{X}}( x_i = z_i)
  \right]
  \limply
  (\kappa(\mbf{x}) \in \fml{T})
$$
that is, $\fml{X}$ is a weak AXp of $\fml{E}'$.
\end{proof}


\propdualmf*

\begin{proof}
Consider $\fml{X} \in \mbb{A}_{\mfrak{u}}(\fml{E}')$, and $\fml{Y} \in \mbb{C}_{\mfrak{u}}(\fml{E}')$.
Suppose by contradiction that $\fml{X} \cap \fml{Y} = \emptyset$.
Because $\fml{X} \in \mbb{A}_{\mfrak{u}}(\fml{E}')$, then it happens that:
$$
  \forall(\mbf{x}\in\mbb{F}).
  \left[
  \bigland\nolimits_{i\in{\fml{X}}}(x_i = z_i)
  \right]
  \limply
  (\kappa(\mbf{x}) \in \fml{T})
$$
Since $\fml{X} \cap \fml{Y} = \emptyset$ ( and both $\fml{X}, \fml{Y}\subseteq\fml{S}$), then $\fml{X} \subseteq \fml{S} \setminus \fml{Y}$, and:
$$
  \forall(\mbf{x}\in\mbb{F}).
  \left[
  \bigland\nolimits_{i\in {\fml{S} \setminus \fml{Y}}}(x_i = z_i)
  \right]
  \limply
  (\kappa(\mbf{x}) \in \fml{T})
$$
But this contradicts the fact that $\fml{Y} \in \mbb{C}_{\mfrak{u}}(\fml{E}')$.

Thus any $\fml{X} \in \mbb{A}_{\mfrak{u}}(\fml{E}')$ is a hitting set of any $\fml{Y} \in \mbb{C}_{\mfrak{u}}(\fml{E}')$, and vice-versa.

What is missing to prove is the minimality of the hitting sets.

Suppose that $\fml{Y} \in \mbb{C}_{\mfrak{u}}(\fml{E}')$ is a hitting set of $\mbb{A}_{\mfrak{u}}(\fml{E}')$.
Suppose by contradiction $\fml{Y}$ is not subset minimal hitting set, that is, there exists $\fml{Z} \subsetneq \fml{Y}$ such that $\fml{Z}$ is a hitting set of $\mbb{A}_{\mfrak{u}}(\fml{E}')$.
By Lemma~\ref{lemma:hs_vs_Xps_1}, then $\fml{Z}$ is a weak CXp of $\fml{E}'$, and there exists an CXp inside $\fml{Z}$, but this contradicts the subset minimality of the CXp $\fml{Y}$ because $\fml{Z} \subsetneq \fml{Y}$, thus $\fml{Y} \in \mbb{C}_{\mfrak{u}}(\fml{E}')$ is a minimal hitting set of $\mbb{A}_{\mfrak{u}}(\fml{E}')$.

Suppose now that $\fml{X} \in \mbb{A}_{\mfrak{u}}(\fml{E}')$ is a hitting set of $\mbb{C}_{\mfrak{u}}(\fml{E}')$.
Suppose by contradiction $\fml{X}$ is not subset minimal hitting set, that is, there exists $\fml{Z} \subsetneq \fml{X}$ such that $\fml{Z}$ is a hitting set of $\mbb{C}_{\mfrak{u}}(\fml{E}')$.
By Lemma~\ref{lemma:hs_vs_Xps_2}, then $\fml{Z}$ is a weak AXp of $\fml{E}'$, and there exists an AXp inside $\fml{Z}$, but this contradicts the subset minimality of the AXp $\fml{X}$ because $\fml{Z} \subsetneq \fml{X}$, thus $\fml{X} \in \mbb{A}_{\mfrak{u}}(\fml{E}')$ is a minimal hitting set of $\mbb{C}_{\mfrak{u}}(\fml{E}')$.
\end{proof}



\proputonou*

\begin{proof}
{\bf Proof of \ref{prop:utonou}.1}.
Suppose $\fml{X}\in\mbb{A}_{\mfrak{u}}(\fml{E}'_r)$, then $\fml{X}$ is a weakAXp of $\fml{E}'_0$ because $\mbf{v}\sqsubseteq\mbf{z}$.
To prove the minimality we proceed by contradiction.
Suppose by contradiction there exists an AXp $\fml{W}$ of $\fml{E}'_0$, such that, $\fml{W} \subsetneq \fml{X}$, then:
$$
  \forall(\mbf{x}\in\mbb{F}).
  \left[
  \bigland\nolimits_{i\in{\fml{W}}}(x_i=v_i)
  \right]
  \limply
  (\kappa(\mbf{x}) \in \fml{T})
$$
Then, because $\fml{W} \subsetneq \fml{X} \subset \fml{S}$:
$$
  \forall(\mbf{x}\in\mbb{F}).
  \left[
  \bigland\nolimits_{i\in{\fml{W}}}(x_i=z_i)
  \right]
  \limply
  (\kappa(\mbf{x}) \in \fml{T})
$$
that is, $\fml{W}$ is a weakAXp of $\fml{E}'_r$, which is a contradiction of minimality of the AXp $\fml{X}$ for $\fml{E}'_r$.
Thus $\mbb{A}_{\mfrak{u}}(\fml{E}'_r)\subseteq\mbb{A}_{\mfrak{u}}(\fml{E}'_0)$.

{\bf Proof of \ref{prop:utonou}.2}.
Consider $\fml{Y}\in\mbb{C}_{\mfrak{u}}(\fml{E}'_r)$, then let $\mbf{x} \in \mbb{F}$ be such that:
$$
  \left(\land_{i\in\fml{S}\setminus\fml{Y}}(x_i=z_i)\right)
  \land
  \left(\kappa(\mbf{x})\not\in\fml{T}\right)
$$
Because $\mbf{v}\sqsubseteq\mbf{z}$, then $\mbf{x}$ also satisfies:
$$
  \left(\land_{i\in\fml{F}\setminus(\fml{Y}\cup\fml{U})}(x_i=v_i)\right)
  \land
  \left(\kappa(\mbf{x})\not\in\fml{T}\right)
$$
that means, $\fml{Y}\cup\fml{U}$ is a weakCXp of $\fml{E}'_0$.
From the weakCXp $\fml{Y}\cup\fml{U}$ we can obtain CXp's contained in $\fml{Y}\cup\fml{U}$.\\
What is left to prove is that $\fml{Y} \subseteq \fml{W}$ for any CXp $\fml{W} \subseteq \fml{Y} \cup \fml{U}$.
Consider a CXp $\fml{W} \subseteq \fml{Y} \cup \fml{U}$.
Let $I_\fml{Y} = \fml{Y} \setminus \fml{W}$, and $\fml{Z} = \fml{W} \setminus \fml{Y}$.
Suppose by contradiction that $I_\fml{Y} \neq \emptyset$.
Observe that $I_\fml{Y} \not\subseteq \fml{W}$, $\fml{Z}\subseteq \fml{U}$, and that $\fml{W} = (\fml{Y} \setminus I_\fml{Y})\cup \fml{Z}$.\\
Because $\fml{W}$ is a CXp of $\fml{E}'_0$, there exists $\mbf{x} \in \mbb{F}$ such that:
$$
  \left(\land_{i\in\fml{F} \setminus \fml{W}} (x_i=v_i)\right)
  \land
  \left(\kappa(\mbf{x})\not\in\fml{T}\right)
$$
then it is also true that $\mbf{x}$ satisfies:
$$
  \left(\land_{i\in\fml{F} \setminus \left[ (\fml{Y} \setminus I_\fml{Y}) \cup \fml{U}\right]}(x_i=z_i)\right)
  \land
  \left(\kappa(\mbf{x})\not\in\fml{T}\right)
$$
that is:
$$
  \left(\land_{i\in\fml{S} \setminus (\fml{Y} \setminus I_\fml{Y})}(x_i=z_i)\right)
  \land
  \left(\kappa(\mbf{x})\not\in\fml{T}\right)
$$
that means $\fml{Y} \setminus I_\fml{Y}$ is a weakCXp for $\fml{E}'_r$, which is a contradiction of minimality of CXp $\fml{Y}$ for $\fml{E}'_r$.
Thus $\fml{Y} \subseteq \fml{W}$ for any CXp $\fml{W} \subseteq \fml{Y} \cup \fml{U}$ of $\fml{E}'_0$.
\end{proof}

\propaxpsvscxps*

\begin{proof}
{\bf Proof of \ref{prop:axps_vs_cxps}.1}.
Consider the process of computing $\fml{X}$, and the computed weak AXp’s given by the sequence
$\langle\fml{X}_r=\fml{S},\fml{X}_{r+1},\ldots,\fml{X}_{r+s}=\fml{X}\rangle$.
Between each pair $(\fml{X}_{j}, ~\fml{X}_{j+1})$ ($r \le j < {r+s}$), a feature $i_{j} (\in \fml{X}_{j})$ is tested to be left unspecified, that is, check if the predicate $\predicate_{\tn{axp}}(\fml{X}_j\setminus\{i_j\})$ is true or not.
If the predicate is false, then $\fml{X}_{(j+1)} = \fml{X}_j$, and obviously, $\mbb{A}_{\mfrak{u}}(\fml{E}'_{j+1}) = \mbb{A}_{\mfrak{u}}(\fml{E}'_j)$.

If the predicate is true, then $\fml{X}_{j+1} = \fml{X}_{j} \setminus \{i_j\}$.
Consider $\fml{W}\in\mbb{A}_{\mfrak{u}}(\fml{E}'_{j+1})$, then $\fml{W} \subseteq \fml{X}_{j+1}$ and $\axp(\fml{W}) = \top$.
As such, $\fml{W} \subseteq \fml{X}_j$, and $\fml{W}$ is subset minimal verifying $\waxp(\fml{W}) = \top$; that is $\fml{W}\in\mbb{A}_{\mfrak{u}}(\fml{E}'_j)$.

We just proved that $\mbb{A}_{\mfrak{u}}(\fml{E}'_{j+1}) \subseteq \mbb{A}_{\mfrak{u}}(\fml{E}'_j)$ for $r\le j < r+s$.
By transitivity, if $\fml{W}\in\mbb{A}_{\mfrak{u}}(\fml{E}'_k)$, then
    $\fml{W}\in\mbb{A}_{\mfrak{u}}(\fml{E}'_j)$, with $r\le j \le k \le r+s $.

{\bf Proof of \ref{prop:axps_vs_cxps}.2}.
As before consider the process of computing $\fml{X}$, and the sequence $\langle\fml{X}_r=\fml{S},\fml{X}_{r+1},\ldots,\fml{X}_{r+s}=\fml{X}\rangle$.
Between each pair $(\fml{X}_{j}, ~\fml{X}_{j+1})$ ($r \le j < {r+s}$), a feature $i_{j} (\in \fml{X}_{j})$ is tested to be left unspecified, that is, check if the predicate $\predicate_{\tn{axp}}(\fml{X}_j\setminus\{i_j\})$ is true or not.
If the predicate is false, then $\fml{X}_{j+1} = \fml{X}_j$, and obviously, $\mbb{C}_{\mfrak{u}}(\fml{E}'_{j+1}) = \mbb{C}_{\mfrak{u}}(\fml{E}'_j)$.
Let $\fml{Z}_j = \emptyset (\subseteq \fml{X}_j)$, then if $\fml{Y} \in \mbb{C}_{\mfrak{u}}(\fml{E}'_{j+1})$ then $\fml{Y} \cup \fml{Z}_j \in \mbb{C}_{\mfrak{u}}(\fml{E}'_j)$.

If the predicate is true, then $\fml{X}_{j+1} = \fml{X}_j \setminus \{i_j\}$.
Consider $\fml{Y} \in \mbb{C}_{\mfrak{u}}(\fml{E}'_{j+1})$, then $\fml{Y}\subseteq \fml{X}_{j+1}$ is a subset minimal set satisfying $\wcxp(\fml{Y}) = \top$, that is:
$$
  \exists(\mbf{x}\in\mbb{F}).
  \left(\land_{i\in\fml{X}_{j+1}\setminus\fml{Y}}(x_i=z_i)\right)
  \land
  \left(\kappa(\mbf{x})\not\in\fml{T}\right)
$$

There are two cases to consider, depending on the necessity of including feature $i_j$ in the CXp for the set of fixed features $\fml{X}_j$.
In the first case, feature $i_j$ is required in the CXp.
This can happen because $i_j$ is not a part of $\fml{X}_{j+1}$, as such in this case, it happens that:
$$
  \forall(\mbf{x}\in\mbb{F}).
  \left(\land_{i\in\fml{X}_{j}\setminus\fml{Y}}(x_i=z_i)\right)
  \limply
  \left(\kappa(\mbf{x})\in\fml{T}\right)
$$
In this case, let $\fml{Z}_j = \{i_j\}(\subseteq \fml{X}_j)$, then we can consider $\fml{Y} \cup \fml{Z}_j$, and because $\wcxp(\fml{Y}) = \top$, we know that:
$$
  \exists(\mbf{x}\in\mbb{F}).
  \left(\land_{i\in\fml{X}_{j} \setminus (\fml{Y} \cup \fml{Z}_j)}(x_i=z_i)\right)
  \land
  \left(\kappa(\mbf{x})\not\in\fml{T}\right)
$$
The subset minimality of $\fml{Y} \cup \fml{Z}_j$ is due to the subset minimality of $\fml{Y}$ in $\fml{X}_{j+1}$.

In the second case, feature $i_j$ is not required in the CXp, that is, it is enough to consider $\fml{Y}$ in the set $\fml{X}_j$ so that:
$$
  \exists(\mbf{x}\in\mbb{F}).
  \left(\land_{i\in\fml{X}_{j} \setminus \fml{Y} }(x_i=z_i)\right)
  \land
  \left(\kappa(\mbf{x})\not\in\fml{T}\right)
$$
Let $\fml{Z}_j = \emptyset (\subseteq \fml{X}_j)$, then if $\fml{Y} \in \mbb{C}_{\mfrak{u}}(\fml{E}'_{j+1})$ then $\fml{Y} \cup \fml{Z}_j \in \mbb{C}_{\mfrak{u}}(\fml{E}'_j)$.

We have proven that for each pair $(\fml{X}_{j}, ~\fml{X}_{j+1})$ ($r \le j < {r+s}$), if $\fml{Y} \in \mbb{C}_{\mfrak{u}}(\fml{E}'_{j+1})$, then there exists $\fml{Z}_j\subseteq \fml{X}_{j}$ such that $\fml{Y} \cup \fml{Z}_j \in \mbb{C}_{\mfrak{u}}(\fml{E}'_j)$.

Consider now $j$, $k$ such that $r\le j\le k\le r+s$, and $\fml{Y} \in \mbb{C}_{\mfrak{u}}(\fml{E}'_{k})$.
Consider the sequence of pairs
$\langle (\fml{X}_{k-1}, \fml{X}_k),
\ldots,
(\fml{X}_j, \fml{X}_{j+1})\rangle$.
Starting from the first pair, apply the previous proven result and obtain that there exists $\fml{Z}_{k-1}\subseteq \fml{X}_{k-1}$ such that $\fml{Y} \cup \fml{Z}_{k-1} \in \mbb{C}_{\mfrak{u}}(\fml{E}'_{k-1})$.
For the next pair use the new CXp $\fml{Y} \cup \fml{Z}_{k-1}$ and obtain that there exists $\fml{Z}_{k-2}\subseteq \fml{X}_{k-2}$ such that $\fml{Y} \cup \fml{Z}_{k-1} \cup \fml{Z}_{k-2} \in \mbb{C}_{\mfrak{u}}(\fml{E}'_{k-2})$.
Proceed in a similar way until the last pair to obtain that there exists $\fml{Z}_{j}\subseteq \fml{X}_{j}$ such that $\fml{Y} \cup \fml{Z}_{k-1}\cup \ldots \fml{Z}_{j} \in \mbb{C}_{\mfrak{u}}(\fml{E}'_j)$.
Since $\fml{X}_{k-1}\subseteq \ldots \subseteq \fml{X}_{j}$, and $\fml{Z}_{k-1}\cup \ldots \fml{Z}_{j} \subseteq \fml{X}_{j}$, then we can consider $\fml{Z} = \bigcup_{t=j}^{k-1} \fml{Z}_{t}$, $\fml{Z} \subseteq \fml{X}_{j}$, and $\fml{Y} \cup \fml{Z} \in \mbb{C}_{\mfrak{u}}(\fml{E}'_j)$ as required.
\end{proof}

\propndual*

\begin{proof}
Consider the process of computing $\fml{X}$, and the computed weak AXp’s given by the sequence
$\langle\fml{X}_r=\fml{S},\fml{X}_{r+1},\ldots,\fml{X}_{r+s}=\fml{X}\rangle$.
Between each pair $(\fml{X}_{j}, ~\fml{X}_{j+1})$ ($r \le j < {r+s}$), a feature $i_{j} (\in \fml{X}_{j})$ is tested to be left unspecified, that is, check if the predicate $\predicate_{\tn{axp}}(\fml{X}_j\setminus\{i_j\})$ is true or not.
If the predicate is true then $\fml{X}_{j+1} = \fml{X}_j \setminus \{i_j\}$, otherwise $\fml{X}_{j+1} = \fml{X}_j$.

Consider $j$ such that $r \le j \le r+s$.
The set $\fml{X}_j$ divide $\fml{F}$ in two, the set of fixed features $\fml{S}=\fml{X}_j$ and the set of unspecified features $\fml{U}=\fml{F} \setminus \fml{X}_j$.
Let $\mbf{z}'$ be such that $z'_i = z_i$ if $i \in \fml{X}_j$, undefined otherwise.
Then, $\fml{E}'_j = (\fml{C},(\mbf{z}',\fml{T}))$ is a generalized explanation problem.
Due to Proposition~\ref{prop:dualmf}, then each AXp in $\mbb{A}_{\mfrak{u}}(\fml{E}'_j)$ is a minimal
  hitting set of the CXp's in $\mbb{C}_{\mfrak{u}}(\fml{E}'_j)$, and
  vice-versa.
\end{proof}
%

\section{Mapping of Class Names to Numbers}

\cref{tab:map:derm,tab:map:soybean,tab:map:zoo} show the mapping
between classes names and features used in the paper. For the
remaining datasets, i.e.\ \texttt{auto} and \texttt{student-por}, the
classes are already numeric.

\begin{table}[ht]
  \begin{center}
    \begin{tabular}{lc}
      \toprule
      Name & Number \\
      \toprule
      psoriasis & 1\\
      seboreic dermatitis & 2\\
      lichen planus & 3 \\
      pityriasis rosea & 4 \\
      cronic dermatitis & 5 \\
      pityriasis rubra pilaris & 6 \\
      \bottomrule
    \end{tabular}
    \caption{Class mapping for \texttt{dermatology}} \label{tab:map:derm}
  \end{center}
\end{table}

\begin{table}[ht]
  \begin{center}
    \begin{tabular}{lc}
      \toprule
      Name & Number \\
      \toprule
      diaporthe-stem-canker & 1 \\
      charcoal-rot & 2 \\
      rhizoctonia-root-rot & 3 \\
      phytophthora-rot & 4 \\
      brown-stem-rot & 5 \\ 
      powdery-mildew & 6 \\
      downy-mildew & 7 \\ 
      brown-spot & 8 \\ 
      bacterial-blight & 9 \\
      bacterial-pustule & 10 \\ 
      purple-seed-stain & 11 \\ 
      anthracnose & 12 \\
      phyllosticta-leaf-spot & 13 \\ 
      alternarialeaf-spot & 14 \\
      frog-eye-leaf-spot & 15 \\ 
      diaporthe-pod-and-stem-blight & 16 \\
      cyst-nematode & 17 \\ 
      2-4-d-injury & 18 \\ 
      herbicide-injury & 19 \\
      \bottomrule
    \end{tabular}
    \caption{Class mapping for \texttt{soybean}} \label{tab:map:soybean}
  \end{center}
\end{table}

\begin{table}[ht]
  \begin{center}
    \begin{tabular}{lc}
      \toprule
      Name & Number \\
      \toprule
      mammal & 1 \\
      reptile & 2 \\
      bug & 3 \\
      bird & 4 \\
      invertebrate & 5 \\
      amphibian & 6 \\
      fish & 7 \\
      \bottomrule
    \end{tabular}
    \caption{Class mapping for \texttt{zoo}} \label{tab:map:zoo}
  \end{center}
\end{table}


\section{Summary of Notation}

Below we summarize the notation used in some of the paper's sections.

\subsection*{Used in~\cref{sec:prelim}}

$\fml{F}=\{1,\ldots,m\}$, set of features \\
$\fml{K}=\{c_1,c_2,\ldots,c_K\}$, set of classes\\
$\mbb{D}=\langle\mbb{D}_1,\ldots,\mbb{D}_m\rangle$, domains of features \\
$\mbb{F}=\mbb{D}_1\times{\mbb{D}_2}\times\ldots\times{\mbb{D}_m}$, feature space \\
$\mbf{x}=(x_1,\ldots,x_m)$, arbitrary point in $\mbb{F}$ \\
$X=\{x_1,\ldots,x_m\}$, set of variables \\
$\mbf{v}=(v_1,\ldots,v_m)$, specific point in $\mbb{F}$ \\
$\fml{M}$, classifier \\
$\kappa:\mbb{F}\to\fml{K}$, classification function of $\fml{M}$ \\
$\fml{C}=(\fml{F},\mbb{D},\mbb{F},\fml{K},\kappa)$, classification problem \\
$(\mbf{v}, c)$, instance \\
$\fml{E}=(\fml{C},(\mbf{v},c))$, explanation problem \\
$
  \waxp(\fml{X}) := \forall(\mbf{x}\in\mbb{F}).
  \left[
  \bigland\nolimits_{i\in{\fml{X}}}(x_i=v_i)
  \right]
  \limply(\kappa(\mbf{x})=c)
$,
from \eqref{eq:axp} \\
%
%
$
  \wcxp(\fml{Y}) :=
  \exists(\mbf{x}\in\mbb{F}).
  \left[
  \bigland\nolimits_{i\not\in{\fml{Y}}}(x_i=v_i)
  \right]
  \land(\kappa(\mbf{x})\not=c)
$,
from \eqref{eq:cxp} \\
%
%
$
  \axp(\fml{X}) :=
  \waxp(\fml{X})
  \land_{t\in\fml{X}}\neg\waxp(\fml{X}\setminus\{t\})
$,
from \eqref{eq:axp3} \\
$
  \cxp(\fml{Y}) :=
  \wcxp(\fml{Y})
  \land_{t\in\fml{Y}}\neg\wcxp(\fml{Y}\setminus\{t\})
$,
from \eqref{eq:cxp3} \\
$
  \mbb{A}(\fml{E}) = \{\fml{X}\subseteq\fml{F}\,|\,\axp(X)\}
$ \\
$
  \mbb{C}(\fml{E}) = \{\fml{Y}\subseteq\fml{F}\,|\,\cxp(Y)\}
$ \\
%

\subsection*{Used in~\cref{sec:xpsi}}

$\mfrak{u}$, distinguished value not in any $\mbb{D}_i$ \\
$\mbb{D}'_i=\mbb{D}_i\cup\{\mfrak{u}\}$, extended domain\\
$
\mbb{F}'=\mbb{D}'_1\times\cdots\times\mbb{D}'_m
$,
extended feature space \\
$\fml{U}$, set of features with unspecified values \\
$\fml{S}$, set of features with specified values \\
$
\mbf{v}\sqsubseteq\mbf{z} :=
\forall(i\in\fml{F}).\left[(v_i=z_i)\lor({z_i}=\mfrak{u})\right]
$,
$\mbf{v}$ covered by $\mbf{z}$ \\
$\kappa':\mbb{F}'\to2^{\fml{K}}$, generalization of classification function \\
$
\kappa'(\mbf{z})=\{c\in\fml{K}\,|\,\mbf{v}\in\mbb{F}\land\mbf{v}\sqsubseteq\mbf{z}\land{c}=\kappa(\mbf{v})\}
$,
from \eqref{eq:defkp} \\
$\fml{T}\subseteq\fml{K}$, target classes\\
$\fml{E}'=(\fml{C},(\mbf{z},\fml{T}))$, generalized explanation problem \\
%
%
$
  \forall(\mbf{x}\in\mbb{F}) . \left(\land_{i\in\fml{S}}(x_i=z_i)\right)\limply\left(\kappa(\mbf{x})\in\fml{T}\right)
$,
$\mbf{z}$ sufficient for $\fml{T}$,
from \eqref{eq:suffpred} \\
$
  \waxp(\fml{X}) :=
   \forall(\mbf{x}\in\mbb{F}).
    \left(\land_{i\in\fml{X}}(x_i=z_i)\right)
    \limply
    \left(\kappa(\mbf{x})\in\fml{T}\right)
$ \\
AXp minimal set $\fml{X}\subseteq\fml{S}$ s.t. $\waxp(\fml{X})=\top$ \\
$
  \wcxp(\fml{Y}) :=
  \exists(\mbf{x}\in\mbb{F}).
  \left(\land_{i\in\fml{S}\setminus\fml{Y}}(x_i=z_i)\right)
  \land
  \left(\kappa(\mbf{x})\not\in\fml{T}\right)
$ \\
CXp minimal set $\fml{Y}\subseteq\fml{S}$ s.t. $\wcxp(\fml{Y})=\top$ \\
$
\mbb{A}_{\mfrak{u}}(\fml{E}') = \{\fml{X}\subseteq\fml{S}\,|\,\axp(X)\}
$ \\
$
\mbb{C}_{\mfrak{u}}(\fml{E}') = \{\fml{Y}\subseteq\fml{S}\,|\,\cxp(Y)\}
$ \\
%
%
\begin{align}
  \predicate_{\tn{axp}}(\fml{W};&\mbb{T},\fml{F},\fml{S},\kappa,\mbf{z},\fml{T})
  \triangleq \nonumber \\
  \neg\consistent&\left(\left\llbracket\left(\bigwedge\limits_{i\in\fml{W}}(x_i=z_i)\right)\land(\kappa(\mbf{x})\not\in\fml{T})\right\rrbracket\right) \nonumber
\end{align}
from \eqref{eq:predaxp} \\
\begin{align}
  \predicate_{\tn{cxp}}(\fml{W};&\mbb{T},\fml{F},\fml{S},\kappa,\mbf{z},\fml{T})
  \triangleq \nonumber\\
  \consistent&\left(\left\llbracket\left(\bigwedge\limits_{i\in\fml{S}\setminus\fml{W}}(x_i=z_i)\right)\land(\kappa(\mbf{x})\not\in\fml{T})\right\rrbracket\right) \nonumber
\end{align}
from \eqref{eq:predcxp} \\

\section{Medical Diagnosis Case Study -- Feature description}


\begin{figure*}[t]
  \captionsetup[subfigure]{aboveskip=-1pt,belowskip=-1pt}
  \begin{minipage}{0.375\textwidth}
    \begin{subfigure}{\textwidth}
      \scalebox{0.925}{
%
\forestset{
  BDT/.style={
    for tree={
      l=1.5cm,s sep=1.0cm,
      if n children=0{}{circle}, 
      draw=black,
      text=black,
      edge={-{Stealth[]}},
    }
  },
}
\begin{forest}
  BDT
  [{$A$}, label={[yshift=-6.5ex]{{\xtiny1}}} 
    [{$P$}, label={[yshift=-6.5ex]{{\xtiny2}}}, 
      edge label={node[midway,left,xshift=-0.5pt] {{\xscriptsize$=0$}}}
          [\rhlight{\tbf{Y}}, label={[yshift=-5.125ex]{{\xtiny4}}},
            edge label={node[midway,left,xshift=-0.5pt] {{\xscriptsize$=1$}}},
            rectangle, fill={tred3!20}
          ]
          [\dghlight{\tbf{N}}, label={[yshift=-5.125ex]{{\xtiny5}}},
            edge label={node[midway,right,xshift=0.5pt] {{\xscriptsize$=0$}}},
            rectangle, fill={tgreen3!25}
          ]
    ]
    [{$P$}, label={[yshift=-6.5ex]{{\xtiny3}}}, 
      edge label={node[midway,right,xshift=0.5pt] {{\xscriptsize$=1$}}}
      [{$N$}, label={[yshift=-6.725ex]{{\xtiny6}}}, 
        edge label={node[midway,left,xshift=-2.25pt] {{\xscriptsize$=0$}}}
        [{$V$}, label={[yshift=-6.5ex]{{\xtiny8}}}, 
          edge label={node[midway,left,xshift=-1.75pt] {{\xscriptsize$=0$}}}
          [{$Z$}, label={[xshift=-3.35ex,yshift=-3.5ex]{{\xtiny10}}}, 
            edge label={node[midway,left,xshift=-2.0pt] {{\xscriptsize$=1$}}}
            [\dghlight{\tbf{N}}, label={[yshift=-5.125ex]{{\xtiny12}}},
              edge label={node[midway,left,xshift=-0.5pt] {{\xscriptsize$=1$}}},
              rectangle, fill={tgreen3!25}
            ]
            [{$S$}, label={[yshift=-6.5ex]{{\xtiny13}}}, 
              edge label={node[near end,right,xshift=0.5pt] {{\xscriptsize$=2$}}}
              [{$G$}, label={[yshift=-6.5ex]{{\xtiny15}}}, 
                edge label={node[midway,left,xshift=-2.5pt] {{\xscriptsize$=1$}}}
                [\dghlight{\tbf{N}}, label={[yshift=-5.125ex]{{\xtiny17}}},
                  edge label={node[midway,left,xshift=0.5pt] {{\xscriptsize$=0$}}},
                  rectangle, fill={tgreen3!25}
                ]
                [\rhlight{\tbf{Y}}, label={[yshift=-5.125ex]{{\xtiny18}}},
                  edge label={node[midway,right,xshift=0.5pt] {{\xscriptsize$=1$}}},
                  rectangle, fill={tred3!20}
                ]
              ]
              [{$H$}, label={[yshift=-6.575ex]{{\xtiny16}}}, 
                edge label={node[midway,right,xshift=0.5pt] {{\xscriptsize$=0$}}}
                [\dghlight{\tbf{N}}, label={[yshift=-5.125ex]{{\xtiny19}}},
                  edge label={node[midway,left,xshift=0.5pt] {{\xscriptsize$=1$}}},
                  rectangle, fill={tgreen3!25}
                ]
                [{$C$}, label={[yshift=-6.5ex]{{\xtiny20}}}, 
                  edge label={node[midway,right,xshift=0.5pt] {{\xscriptsize$=0$}}}
                  [\rhlight{\tbf{Y}}, label={[yshift=-5.125ex]{{\xtiny21}}},
                    edge label={node[midway,left,xshift=-0.5pt] {{\xscriptsize$=1$}}},
                    rectangle, fill={tred3!20}
                  ]
                  [{$G$}, label={[yshift=-6.5ex]{{\xtiny22}}}, 
                    edge label={node[midway,right,xshift=0.5pt] {{\xscriptsize$=0$}}}
                    [\rhlight{\tbf{Y}}, label={[yshift=-5.125ex]{{\xtiny23}}},
                      edge label={node[midway,left,xshift=-0.5pt] {{\xscriptsize$=1$}}},
                      rectangle, fill={tred3!20}
                    ]
                    [\dghlight{\tbf{N}}, label={[yshift=-5.125ex]{{\xtiny24}}},
                      edge label={node[midway,right,xshift=0.5pt] {{\xscriptsize$=0$}}},
                      rectangle, fill={tgreen3!25}
                    ]
                  ]
                ]
              ]
            ]
            [\rhlight{\tbf{Y}}, label={[yshift=-5.125ex]{{\xtiny14}}},
              edge label={node[midway,right,xshift=0.5pt] {{\xscriptsize$=0$}}},
              rectangle, fill={tred3!20}
            ]
          ]
          [\rhlight{\tbf{Y}}, label={[yshift=-5.125ex]{{\xtiny11}}},
            edge label={node[midway,right,xshift=0.5pt] {{\xscriptsize$=0$}}},
            rectangle, fill={tred3!20}
          ]
        ]
        [\rhlight{\tbf{Y}}, label={[yshift=-5.125ex]{{\xtiny9}}},
          edge label={node[midway,right,xshift=0.5pt] {{\xscriptsize$=1$}}},
          rectangle, fill={tred3!20}
        ]
      ]
      [\rhlight{\tbf{Y}}, label={[yshift=-5.125ex]{{\xtiny7}}},
        edge label={node[midway,right,xshift=0.5pt] {{\xscriptsize$=1$}}},
        rectangle, fill={tred3!20}
      ]
    ]
  ]
\end{forest}}
    \end{subfigure}

  \end{minipage}
  \begin{minipage}{0.625\textwidth}
    \begin{subfigure}{\textwidth}
      \begin{center}
        \renewcommand{\arraystretch}{1.0125}
        \begin{tabular}{cccccc} \toprule
          Feat.~\# & Name & Meaning & Definition & Domain & Trait/Symp. \\ \toprule 
          1 & $A$ & Age & $\tn{Age}>5$? & $\{0,1\}$ & T \\ \midrule
          2 & $P$ & Petechiae & Petechiae? & $\{0,1\}$ & S \\ \midrule 
          3 & $N$ & Stiff Neck & Stiff Neck? & $\{0,1\}$ & S \\ \midrule 
          4 & $V$ & Vomiting & Vomiting? & $\{0,1\}$ & S \\ \midrule 
          5 & $Z$ & Zone & Zone${=}$? & $\{0,1,2\}$ & T\\ \midrule 
          6 & $S$ & Seizures & Seizures? & $\{0,1\}$ & S \\ \midrule
          7 & $G$ & Gender & Gender? & $\{0\,\tn{(F)},1\,\tn{(M)}\}$ & T \\ \midrule
          8 & $H$ & Headache & Headache? & $\{0,1\}$ & S \\ \midrule
          9 & $C$ & Coma & Coma? & $\{0,1\}$ & S \\
          \bottomrule
        \end{tabular}
      \end{center}
    \end{subfigure}

    \bigskip\smallskip

    \begin{subfigure}{\textwidth}
      \begin{center}
        \renewcommand{\arraystretch}{1.0125}
        \begin{tabular}{cc} \toprule
          Path \& Prediction &
          $\langle1,3,6,8,10,13,16,20,22,23\rangle$ \& \tbf{Y} \\ \midrule
          Features in path & $\{A,P,N,V,Z,S,H,C,G\}$ \\ \midrule
          Symptoms in path & $\{P,N,V,S,H,C\}$ \\ \midrule
          Active symptoms & $\{V\}$ \\ \midrule 
          Computed AXp & $\{A,G,H,Z\}$ \\ \midrule
          Interpretation &
          $\tn{Age}{\ge}5\land\tn{Male}\land\tn{no~Headache}\land\tn{Zone}{=}\tn{2}$
          \\ \toprule 
          Conclusion & MD meningitis can be predicted without symptoms\\
          \bottomrule
        \end{tabular}
      \end{center}
    \end{subfigure}
  \end{minipage}
  \caption{DT for non-invasive diagnosis of MD meningitis.} \label{fig:cstudy}
\end{figure*}


\cref{fig:cstudy} shows a decision tree for the non-invasive diagnosis
of a concrete type of meningitis, Meningococcal Disease (MD), obtained
from~\cite[Figure~9]{belmonte-ieee-access20}. The DT is to be used on
suspected cases of meningitis, and the goal is to decide whether the
type is MD or not.
As can be observed from the names of the features, some refer to
traits of a patient (e.g.\ age, gender, and place where she lives), and
some other refer to symptoms that the patient exhibits (e.g.\ being in
a coma, headache, seizures, vomiting, stiff neck, and existence of
petechiae).

We want to assess the quality of the classifier, and so we seek to
understand whether a patient can be diagnosed with MD meningitis
without exhibiting any of the symptoms.
The proposed procedure operates as follows. For each leaf predicting
the class of interest (i.e.\ a diagnosis of MD meningitis) we identify
the features representing symptoms and, among these, we pick the ones
that are active (i.e.\ the patient exhibits the symptom).
We then declare the other features (non-active or non-symptom) as
specified, and leave the active symptom features as unspecified.
%
If the specified features are sufficient for the prediction, then we
know that
a patient can be diagnosed with MD meningitis despite not exhibiting
any symptoms. Notwithstanding, we proceed to compute an AXp, since
this offers an irreducible set of features sufficient for the unwanted
prediction. (Although we consider a concrete use case, the procedure
outlined above is completely general as an arbitrary number of
identified features could be considered.)

For the example DT, and for the path shown, the only feature that is a
symptom and which is active is $V$. We then search for an AXp where
this feature is left unspecified. As can be concluded, there exists
such an AXp.
Thus, the DT can serve to diagnose MD meningitis for patients without
any active (i.e.\ true) symptom, among the symptoms considered. The
active symptom feature in the path is only misleadingly being used for
the prediction; as the computed AXp confirms, the active feature is
redundant, and so irrelevant for the prediction.
The ensuing conclusion is that the DT proposed
in~\cite{belmonte-ieee-access20} exhibits unrealistic diagnoses. As
demonstrated by the computed AXp, any human being suspect of having
meningitis, older than 5 years of age, that is a male, with an urban
residence (i.e.\ $\tn{Zone}{=}2$), and exhibiting no headache will be
diagnosed as having MD meningitis!
By applying a similar analysis on path $\langle1,3,6,8,11\rangle$, and
by leaving again $V$ unspecified, we also conclude that any patient
older than 5 years of age that is not Vomiting will also be diagnosed
with MD meningitis.
Finally, an even stronger test can be made with respect to path
$\langle1,3,6,8,10,14\rangle$. In this case, we can declare \emph{any}
of the symptoms unspecified, and conclude that $A$ and $Z$ is
sufficient for the prediction. The resulting AXp is $\{A,Z\}$. Hence,
the prediction of MD meningitis involves testing no symptoms
whatsoever.

The identified problems can result from a very unbalanced dataset, an
incorrect tree inducer, or some critical features that are not being
accounted for. It seems unlikely that the issues with the DT might
result from a flipped test, e.g.\ the test on $V$, since flipping the
values of the test on $V$ would yield the same conclusions.
As can be observed, the issue with the DT of~\cref{fig:cstudy} is not
easily spotted by inspection, concretely for the longer path.
Furthermore, such analysis by inspection would be far more challenging
on larger DTs. More importantly, the proposed analysis is made
feasible by the generalized explainability solution proposed in this paper.



\section{Illustrative Examples}

Below we detail examples of traces for the algorithms proposed in 
the paper using the running \cref{ex:runex01}.

\subsection{Computation of One AXp/CXp}

\subsection{Enumeration of AXp's/CXp's}


\end{document}